\documentclass[runningheads]{llncs}

\usepackage{eccv}

\usepackage{eccvabbrv}

\usepackage{graphicx}
\usepackage{booktabs}
\usepackage{multirow}
\usepackage[export]{adjustbox}

\usepackage{tcolorbox}
\tcbuselibrary{listings,skins,breakable}

\usepackage{expl3}

\usepackage{arydshln}
\usepackage{amsmath}
\usepackage{amssymb}

\usepackage[accsupp]{axessibility}  

\usepackage[hidelinks]{hyperref}

\usepackage[all]{hypcap}
\usepackage[svgnames,table]{xcolor}

\usepackage{orcidlink}

\newcommand{\beq}{\vspace{0mm}\begin{equation}}
\newcommand{\eeq}{\vspace{0mm}\end{equation}}
\newcommand{\beqs}{\vspace{0mm}\begin{eqnarray}}
\newcommand{\eeqs}{\vspace{0mm}\end{eqnarray}}
\newcommand{\barr}{\begin{array}}
\newcommand{\earr}{\end{array}}

\newcommand{\ones}[0]{\mathbb{I}}

\usepackage{booktabs}
\usepackage{array}
\newcolumntype{L}[1]{>{\raggedright\let\newline\\\arraybackslash\hspace{0pt}}m{#1}}
\newcolumntype{C}[1]{>{\centering\let\newline\\\arraybackslash\hspace{0pt}}m{#1}}
\newcolumntype{R}[1]{>{\raggedleft\let\newline\\\arraybackslash\hspace{0pt}}m{#1}}

\newcommand{\D}{\mathcal{D}}

\newcommand{\name}{\textsc{Spire}}

\newcommand{\de}[2]{%
  \delta_{#1}%
  \if\relax\detokenize{#2}\relax%
  \else(#2)%
  \fi%
}

\definecolor{lightgreen}{rgb}{0.22,0.70,0.30}%
\definecolor{Gray}{gray}{0.95}
\definecolor{Cornsilk}{rgb}{1.0, 0.97, 0.86}

\definecolor{lightblue}{HTML}{0064E0}
\definecolor{fg}{HTML}{1C2B33}
\definecolor{bg}{HTML}{F1F4F7}

\newtheorem{assumption}[theorem]{Assumption}

\begin{document}

\title{
Personalization as Inverse Planning: Learning Latent Design Intents for Agentic Slide Generation via Structural Denoising
} 

\titlerunning{
}

\author{
Tianci Liu\inst{1,*} \and
Zihan Dong\inst{2} \and
Linjun Zhang\inst{2} \and
Haoyu Wang\inst{3} \and
Jing Gao\inst{1} \and \\
Emre K\i c\i man\inst{4} \and
Ranveer Chandra\inst{4} \and
Wei-Ting Chen\inst{4}\textsuperscript{\dag}
}

\authorrunning{T.~Liu et al.}

\institute{
Purdue University \and
Rutgers University \and
University at Albany \and
Microsoft
}

\maketitle

\begin{abstract}

Slide design requires personalizing both deck themes and page layouts. Yet, current AI agent-based methods struggle with fine-grained, page-level design. Solely relying on prespecified templates or user verbose instructions, they fail to capture latent design intents, leaving Page-level Slide Personalization (PSP) unresolved. To close this gap, this work formulates PSP as an inverse planning problem. We propose to learn a design intent without assuming any knowledge of the specific executing tools (e.g., PowerPoint, Beamer) being used. However, relinquishing control over these tools makes the problem intractable to optimize end-to-end. To overcome this, we propose {\name}, a principled framework to solve PSP approximately. By intentionally corrupting the visual structures of clean slides, {\name} creates a verifiable task to denoise the corruption, whereby two agents learn to collaboratively refine executable designs via reinforcement learning (RL). We present a proof that structural denoising is a consistent surrogate for PSP, and that the multi-agent formulation strictly reduces policy gradient variance in RL. Extensive experiments demonstrate the superiority of {\name}.

\end{abstract}

\def\thefootnote{*}\footnotetext{Work done during an internship at Microsoft.}\def\thefootnote{\arabic{footnote}}
\def\thefootnote{†}\footnotetext{Corresponding author.}\def\thefootnote{\arabic{footnote}}
\vspace{3mm}

\section{Introduction}
\label{sec:intro}

\begin{figure*}[t!]
\centering
\includegraphics[width=1.0\linewidth]{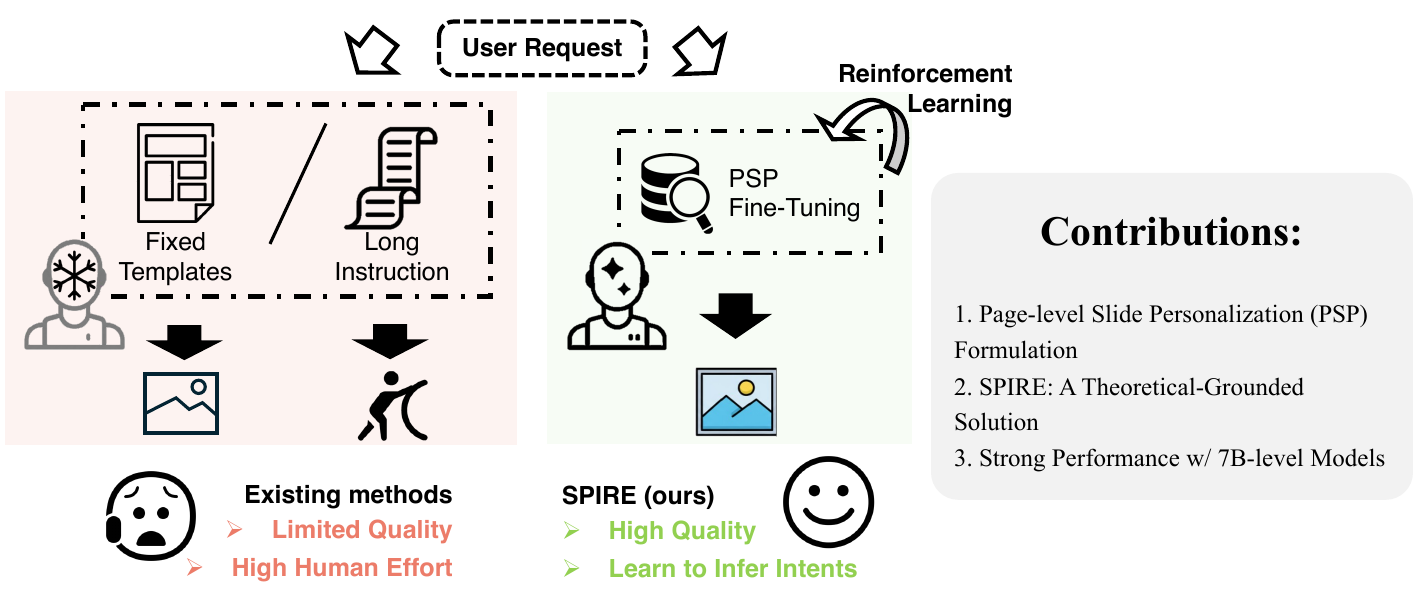}
\caption{
Illustration of {\name}.
Trained with reinforcement learning, 
{\name} learns to infer design intents instead of using prespecified layout templates or lengthy user instructions as existing methods do~\cite{zheng2025pptagent,ge2025autopresent}, offering both \emph{theoretical} and \emph{empirical} advantages.
}
\label{fig:teaser}
\end{figure*}

Slide decks are a primary medium for communicating ideas in academia and industry~\cite{rojo2006critical,hu2013ppsgen,trelease2016chalkboard}. 
Yet, creating a high-quality deck is a design-intensive process. Depending on the presenter and the target audience, the \emph{same} content can call for different visual treatments. 
These treatments are often personalized to a speaker's habitual design, a lab's visual identity, or an organization's brand guidelines. 
In short, practical slide generation is inherently a \emph{personalized} task.

Recent advances in multi-modal large language models (MLLMs) have enabled agentic pipelines that automate slide generation for practical use~\cite{sun2021d2s,fu2022doc2ppt,mondal2024presentations,ge2025autopresent,zheng2025pptagent,xu2025textual,pang2025paper}.
These systems typically decompose slide creation into modular stages such as outlining, asset extraction, layout arrangement, and iterative refinement, thereby improving deck-level coherence via template selection, visual feedback, and multi-agent coordination~\cite{zheng2025pptagent,zeng2025slidetailor,pang2025paper,xie2025slidebot,jung2025talk,jang2026deckbench}.
However, they struggle with fine-grained \emph{page-level} design, 
{which requires deciding visual hierarchy, element alignment, spacing,
and styling choices.} 
Nonetheless, 
existing agentic systems handle page-level layout and styling passively:
Once the content is specified, 
the page-level design is largely overlooked, either by following generic system-defined templates~\cite{maheshwari2024presentations,zheng2025pptagent,xie2025slidebot,jang2026deckbench}, or by requiring lengthy, explicit user instructions~\cite{ge2025autopresent,pang2025paper}, instead of inferring the user’s latent intent.
Consequently, they are too coarse to achieve satisfactory Page-level Slide Personalization (PSP).

To close this gap, we introduce the task of agentic PSP; the concept is depicted in Fig~\ref{fig:teaser}.
Our formulation is inspired by how human designers teach slide making in practice:
\emph{Given} a detailed plan elaborating the page specification such as layout structure and visual hierarchy,
a generally capable executor (e.g., an experienced designer unfamiliar with specific corporate styles) can reliably reproduce a high-quality personalized slide visual by following the plan step-by-step.

But such actionable plans are infeasible to collect at scale in practice.
{Detailed intent annotations are rarely available
in real-world slide corpora~\cite{ge2025autopresent}}, making 
direct supervised training impracticable.
As a result, PSP requires one to \emph{proactively infer} the plan as a latent intent in order to actively guide the generation process.
Furthermore, this plan is naturally context-dependent: (enterprise) users often maintain more than one set of intents depending on their specific scenarios.
Fortunately, such contextual intent can be largely specified by having the user provide a small number of reference slides.
To this end, we formulate PSP as a probabilistic problem that explicitly models the design intent as a latent random variable.
Given (1) user-provided page assets and (2) a small set of reference slides, we infer the final, detailed plan, which is then seamlessly passed to a downstream visual designer for execution.
Intuitively, our formulated PSP aims to infer a plan that, once executed by a capable, non-personalized executor, can most reliably reproduce the desired personalized slide given the user's references.
{\emph{This probabilistic formulation treats design intent as a latent variable and provides a principled foundation for page-level slide personalization.}}

Crucially, this latent-intent formulation of PSP is computationally challenging to solve for two reasons.
First, it evaluates a proposed plan based on whether it enables the executor to reproduce the desired visual.
This requires a rendering likelihood, which lacks a clear, direct objective that one can optimize.
Naive image-level similarity provides an unreliable estimate for slide quality, which is governed by discrete, structured design decisions rather than purely pixel-wise closeness~\cite{ge2025autopresent,tang2025slidecoder,qu2025igd,xu2025pregenie}.
Second, the executor that turns a plan into a slide is effectively a black box, which further impedes optimizing the PSP objective with standard numerical methods~\cite{ruder2016overview,lan2020first}.
Therefore, an effective approximation is needed to make this objective tractable. We detail this formulation and its computational challenges in Sec.~\ref{sec:method:form}.

As a remedy, in this work we propose \name\ ({\underline{S}tructural \underline{P}lanning via \underline{I}nverse \underline{RE}construction}), a structural denoising framework that turns the hard-to-optimize \emph{slide} personalization objective into a verifiable reconstruction problem that serves as a good proxy.
Our solution extends existing multi-agent visual generation frameworks~\cite{hahn2024proactive,ma2025talk2image,venkatesh2025crea,jang2026deckbench,jaiswal2026iterative} to a principled training objective.
With the aforementioned PSP goal of learning a reference-conditioned page-level plan, we alternate between a critic that provides semantic feedback and a planner that updates its plan accordingly.
The key idea is to exploit the discrete structural nature of slides: instead of perturbing pixels, we intentionally corrupt a gold slide by applying random, element-level structural perturbations to its layout, hierarchy, and styling, and record the corresponding discrepancies.
\emph{This yields scalable self-supervision where suboptimal parts to improve are known by construction.}

Built upon the structural perturbation,
\name\ trains two complementary components separately on this shared signal.
A critic learns to read the corrupted slide and the user's references, and to produce structured, actionable feedback that pinpoints the discrepancies as semantic edit suggestions.
A planner learns to produce an executable, editable plan, either proposing a plan from scratch or revising an imperfect plan guided by the critic's feedback.
As both components are trained to recover from controlled structural corruptions, the critic's feedback becomes verifiable,
and the planner learns to improve plans without relying on gradients through the black-box executor, making latent-intent PSP tractable in practice.
We resort to reinforcement learning~\cite{shao2024deepseekmath,yu2025dapo} to train the two agents.
Sec.~\ref{sec:method:spire} details the proposed {\name}.
We provide theoretical analysis of its superiority in Sec.~\ref{sec:theory}.
\emph{This principled method and its theoretical advantage are the main technical contribution of this work.}

Our paper is organized as follows.
Sec.~\ref{sec:method} formulates PSP and details the proposed {\name}.
Extensive experimental results in Sec.~\ref{sec:exp} demonstrate the effectiveness of our method.
In the remaining part of this paper, we review related work in Sec.~\ref{sec:relate}, and conclude the paper in Sec.~\ref{sec:conclusion}.

\section{Proposed Method}
\label{sec:method}

This section formalizes the task of reference-based
page-level slide personalization (PSP),
which entails an intractable optimization problem due to contextual intent being latent.
We propose {\name} as a principled approximate solution.

\subsection{Problem Formulation}
\label{sec:method:form}

Suppose a (enterprise) user specifies a high-level instruction $x$ about slide content (e.g., text content and visual elements)\footnote{In this paper, we refer to page-level objects on a slide (e.g., text boxes, images, shapes, charts, and tables) uniformly as \emph{visual elements}. A \emph{text box} is a type of visual element that contains \emph{text content}.}, and provides $K$ reference slides
\begin{align*}
\D_\text{ref} = \{ s_\text{ref}^{(1)}, \ldots, s_\text{ref}^{(K)} \},
\end{align*}
to exemplify their contextually preferred style.
We define PSP as producing a target slide $s$ that can accurately reflect the user's exact intent that is not elaborated in the verbal $x$. To reflect the need for creative design, we allow $\D_\text{ref}$ not to cover the optimal layout $s$ should use.

We refer to this intent as a \emph{plan} and consider it as a latent variable denoted by $z$.
Intuitively, $z$ can be a step-by-step actionable instruction that contains a rich page-level specification (e.g., layout coordinates and element styling).

Following a well-formed $z$,
a generally capable executor $E$ can be unambiguously guided to produce a high quality visual $s$ that is \emph{personalized} to the user's needs, even if $E$ itself is not directly tailored to the user's preference.
From a probabilistic perspective,
this implies that the visual realization $s$ is \emph{conditionally independent} of the user context given the plan $z$:
\begin{align*}
p(s \mid z, x, \mathcal{D}_{\text{ref}}) = p(s \mid z).
\end{align*}
Here the randomness is induced by the use of black-box $E$, which can be a coding agent~\cite{openai2022chatgpt,zhang2024codeagent,dong2025survey,comanici2025gemini,Claude4,gpt5,jung2025talk},
an image generator~\cite{ye2023ip,esser2024scaling,ma2024subject,gpt-image-1}, or a human designer.
Note that disentangling the plan $z$ and $E$ is on purpose, so that PSP does not need to be conducted whenever a new executor $E$ is introduced.

Based on this definition,
letting $\pi_\theta$ denote a multi-modal language model \emph{planner},
PSP can be formulated as learning $\pi_\theta$ from a personal corpus
\begin{align*}
\D_\text{tr} = \{(x^{(1)}, (s^{*})^{(1)}, \D_\text{ref}^{(1)}), \ldots, (x^{(J)}, (s^{*})^{(J)}, \D_\text{ref}^{(J)})\}.
\end{align*}
Here $s^*$ denotes the gold slide for $x$. We detail the construction of $\D_\text{tr}$ in the supplementary material.

Given  $\D_\text{tr}$, the goal of PSP is to maximize the marginal likelihood of $s^*$ given $x, \D_\text{ref}$. This  can be expressed as the following optimization objective:
{\small
\begin{align}
\notag
\mathcal{J}(\theta)
&= \mathbb{E}_{(x, s^*, \D_\text{ref}) \sim \mathcal{D}_{\text{tr}}} \big[ \log p_\theta(s^* \mid x, \mathcal{D}_{\text{ref}}) \big] \\
&= \mathbb{E}_{(x, s^*, \D_\text{ref}) \sim \mathcal{D}_{\text{tr}}} \left[ \log \int \pi_\theta(z \mid x, \mathcal{D}_{\text{ref}}) \, p(s^* \mid z) \, dz \right].
\label{eq:formal}
\end{align}
}%
At a colloquial level, Eq.~\eqref{eq:formal} seeks $\pi_\theta$ that can generate plans that can maximize the likelihood of black-box $E$ to reproduce (generate) the gold $s^*$.
Unfortunately, optimizing Eq.~\eqref{eq:formal} is prohibitive due to two reasons.

First, the likelihood $p(s^* \mid z)$ lacks an explicit form, which makes the objective intractable.
One common solution is to approximately optimize
{\small
\begin{align*}
p(s^* \mid z) \propto -\| s^* - E(z) \|_2,
\end{align*}
}%
in the pixel space, aiming to push generated $s = E(z)$ towards $s^*$ pixel-wise.
Yet, recent works~\cite{ge2025autopresent,tang2025slidecoder,xu2025pregenie} showed that treating slides as pure images struggles to capture their discrete logical structure, offering unreliable guidance for the planner.

Second, black-box $E$ cannot be differentiated through,
which prevents direct computation of the gradient for $\pi_\theta$ with respect to $s = E(z)$ given by
\begin{align*}
\nabla_\theta \log p_\theta(s = E(z) \mid x, \mathcal{D}_{\text{ref}}).
\end{align*}
In addition, zeroth-order optimization methods~\cite{chen2023instructzero,malladi2023fine,hu2024localized,li2025survey} \textcolor{purple}{prove} ineffective for this context for two reasons. First, their high computational cost~\cite{park2025zip,zhan2024unlocking,qi2025learning} makes them prohibitive for slide generation, where user preferences and instructions take diverse forms. Second, these methods approximate gradients by probing the specific executor $E$, which makes them prone to overfitting on \textcolor{purple}{a} particular instance~\cite{hu2024localized}, leading to limited generalizability of $\pi_\theta$ to other executors~\cite{qi2025learning}.

In the next section, we show that by leveraging the \emph{structural} discreteness of slide visuals, we can circumvent these hurdles via a surrogate denoising objective, and we propose {\name} as an effective approximate solution.

\subsection{\name: An Effective Solution for PSP}
\label{sec:method:spire}

\begin{figure*}[t!]
\centering
\includegraphics[width=1.05\linewidth]{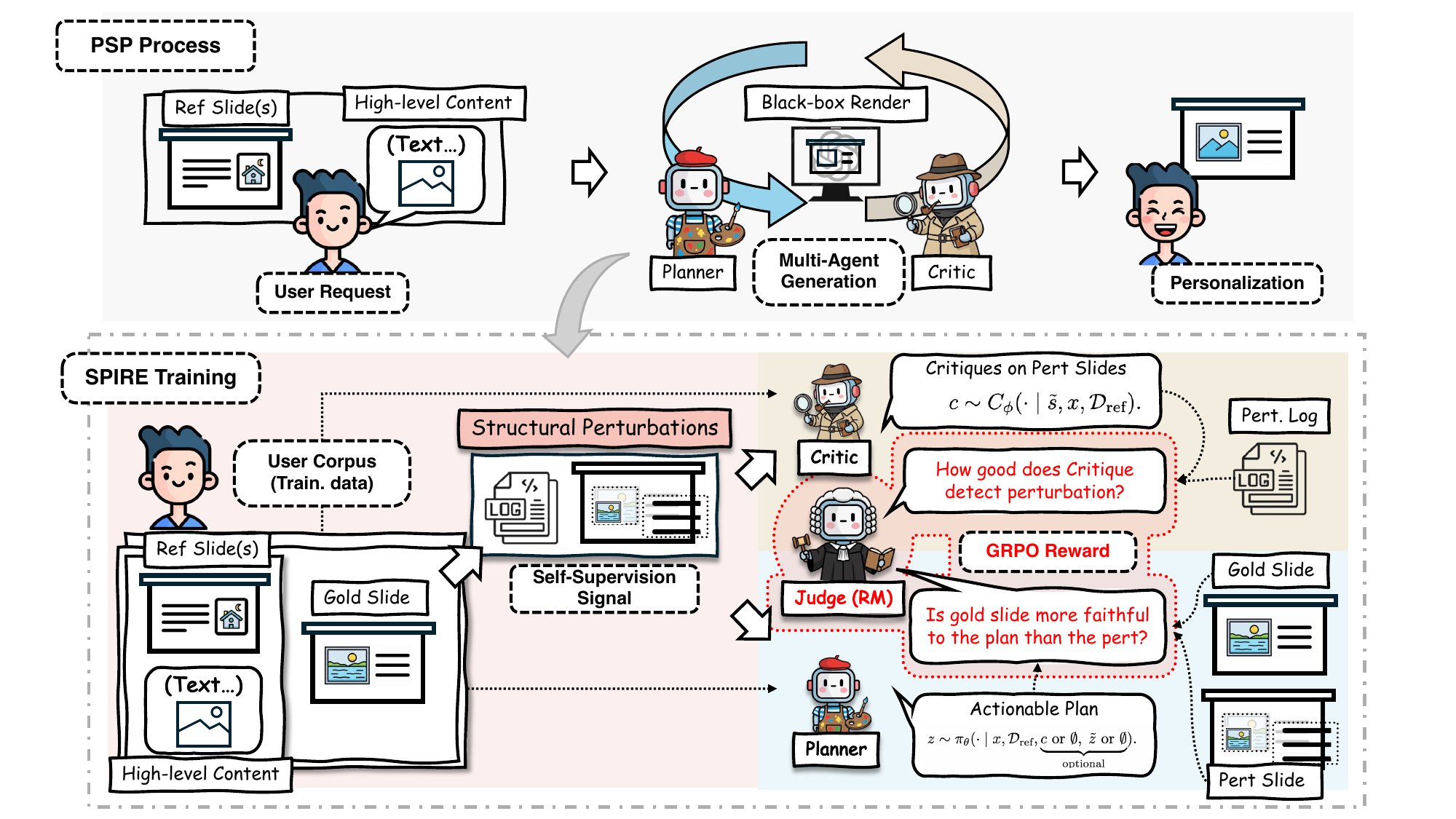}
\caption{
{
{The overview of Personalized Slide Personalization (PSP) and the proposed {\name}.}
The Planner and Critic are trained to recover self-supervised Structural Perturbation in a decomposed way with reinforcement learning (red).
After training, they collaborate with a black-box render executor to conduct PSP.
}
}
\label{fig:overview}
\end{figure*}

To tackle the computational challenge of Eq.~\eqref{eq:formal}, we approximate the two parts therein with two complementary agents based on their intrinsic goals.
The two agents are trained to exploit the discrete structural nature of slides via a shared structural perturbation process, turning the problem into a verifiable reconstruction signal through a
\emph{denoising} process over the slide's discrete structural space.
We dub our method
{\underline{S}tructural \underline{P}lanning via \underline{I}nverse \underline{RE}construction} (\name).
Fig~\ref{fig:overview} depicts the overview of PSP and {\name}.

We begin by checking the roles of Eq.~\eqref{eq:challenge}
{
\footnotesize
\begin{align}
\label{eq:challenge}
p_\theta(s^* \mid x, \mathcal{D}_{\text{ref}})
=
\int
\underbrace{\pi_\theta(z \mid x, \mathcal{D}_{\text{ref}})}_{
\begin{array}{c}
\text{planner }
\text{proposal,}
\end{array}}
\underbrace{p(s^* \mid z)}_{
\begin{array}{c}
\text{render }
\text{likelihood}
\end{array}}
\, dz ,
\end{align}
}%
where the integrand consists of two parts.
The first term, $\pi_\theta(z \mid x, \mathcal{D}_{\text{ref}})$, denotes how the planner $\pi_\theta$ proposes a plan $z$ to reflect the user's latent intent.
Next, the second term $p(s^* \mid z)$ measures the \emph{render} likelihood of the golden slide $s^*$, depicting the plan's \emph{visual} validity.
PSP wants to adjust the planner proposal based on $p(s^* \mid z)$, which, unfortunately, is intractable due to black-box $E$.

To bypass this challenge,
we approximate the update by incorporating a \emph{critic} agent $C_\phi$ to act as a learned proxy for the likelihood.
By learning to \emph{discriminate} how a generation $s$ visually deviates from the gold $s^*$,
the critic captures the user's implicit preferences and provides actionable semantic feedback,
thereby providing guidance for the planner to update its proposal $\pi_\theta(z \mid x, \mathcal{D}_{\text{ref}})$ in context $\mathcal{D}_{\text{ref}}$.
With a reliable critic, the planner can iteratively improve its proposal: starting from an initial plan, it revises the plan based on the critique.

Yet, reliably learning such a critic, and thus enabling critique-guided planner updates, is also non-trivial.
The difficulty is the lack of \emph{verifiable} supervision: for a random generation $s$, one can hardly evaluate whether a critique is objectively correct, which makes subsequent planner training unreliable as well.

\textbf{Denoising Signal.}
To solve this problem, we propose a ``{denoising}'' objective through a corruption-reconstruction strategy.
Namely, we construct a verifiable self-supervised signal that can be shared across the two agents' training.
Specifically, we corrupt the gold slide $s^*$ into a perturbed slide $\tilde{s}$ in a controlled way, and record the ground-truth corruption operations.
Next, the critic is asked to criticize $\tilde{s}$ based on visual evidence, and its critique can be verified by checking the recorded corruption operations.
The same corruption record also helps train the planner to perform critique-guided recovery in the \emph{plan space}: it learns to revise a suboptimal plan $\tilde z$ using the critique $c$.
We will detail this soon.

\textbf{Structural Noises.}
We leverage the discrete structural nature of slides to design \emph{structural corruptions}, rather than adding pixel-level noise to the visuals.
Specifically, we represent each slide $s^*$ as a collection of visual elements that incorporate different element types.
Then, we apply random perturbation to each visual element by modifying its structural attributes.
Each element type admits its own perturbation spaces, e.g., position for layout, size for hierarchy, and color palette for styling.
The corruption occurs along three dimensions: \emph{spatial layout} (e.g., shifting coordinates), \emph{visual hierarchy} (e.g., distorting element sizes), and \emph{stylistic coherence} (e.g., altering text colors).
To prevent spurious correlations, we independently perturb each randomly selected element.

Formally, assume $s^*$ contains $N$ visual elements $\{e_i \}_{i=1}^{N}$, and let $\mathcal{T}(e_i)$ denote the element type (e.g., text box, image, shape).
We represent the corruption by an element-level action list $\mathcal{A}=\{a_i\}_{i=1}^{N}$, where each $a_i$ perturbs some attributes of $e_i$ or leaves it unchanged.
Perturbation actions are sampled independently, i.e.,
{\small
\begin{align*}
q(\mathcal{A} \mid s^*)
&= q(a_i, \dots, a_N \mid s^*)
= q(a_i, \dots, a_N \mid e_i, \dots, e_N)
\overset{(a)}{=} \prod_{i=1}^{N} q_{\mathcal{T}(e_i)}(a_i \mid e_i),
\end{align*}
}%
where $(a)$ holds by independence.
Here $q_{\mathcal{T}(e_i)}(\cdot \mid e_i)$ denotes a perturbation distribution over actions for element $e_i$ (admitting no perturbation).
Subscript $\mathcal{T}(e_i)$ indicates that action spaces depend on the element types.
Having the sampled action list $\mathcal{A}$, we apply these actions to $s^*$ and denote the corrupted slide by $\tilde{s}$.
The inverse of each applied perturbation forms a discrepancy list:
\begin{align*}
\mathcal{A}_{\text{diff}} = \{ a_i^{-1} \mid a_i \neq \emptyset, 1 \leq i \leq N \},
\end{align*}
where $\emptyset$ denotes leaving the element unchanged.
This yields self-supervised triplets $(\tilde{s}, s^*, \mathcal{A}_{\text{diff}})$, providing scalable supervision without manual annotation.
Built upon the structural perturbation, \name~factorizes the intractable PSP problem from Eq.~\eqref{eq:formal} into the following two complementary sub-tasks.

\textbf{Structural Discrimination.}
This task aims to train critic $C_\phi$ to identify and articulate the discrepancies in $\mathcal{A}_{\text{diff}}$,
which correspond to moving $\tilde{s}$ toward the gold $s^*$
to maximize the \emph{render likelihood} term in Eq.~\eqref{eq:challenge}.
Specifically,
the critic generates a critique $c$ based on $x$, a suboptimal visual $\tilde{s}$, and reference $\mathcal{D}_{\text{ref}}$:
\begin{align*}
c \sim C_\phi(\cdot \mid \tilde{s}, x, \mathcal{D}_{\text{ref}}).
\end{align*}
Here, $c$ encompasses a list of issues and targeted corrections, translating errors in $\tilde{s}$ into actionable textual feedback.
We evaluate $c$ against the full action list $\mathcal{A}$ with a capable LLM judge (GPT-4o-mini).
For each element $e_i$ ($1 \leq i \leq N$), the judge verifies whether $c$ correctly handles that element:
\begin{align*}
y_i = J(c, e_i, a_i) \in \{0, 1\},
\qquad
1 \leq i \leq N.
\end{align*}
Here $y_i=1$ indicates a correct decision: the critique identifies and corrects the perturbation when $a_i \neq \emptyset$, or correctly refrains from reporting an issue when $a_i = \emptyset$; and 0 otherwise.
Subsequently, we define the element-level verification reward as the average verification accuracy over all $N$ elements:
\begin{align*}
R_{\text{vfy}}(c, \mathcal{A})
=
\frac{1}{N}\sum_{i=1}^{N} y_i.
\end{align*}
We train the critic using DAPO~\cite{yu2025dapo}, a strong reinforcement learning algorithm, to maximize the expected verification-format reward:
\begin{align}
\label{eq:critic_obj}
\mathcal{J}_{\text{critic}}(\phi)
=
\mathbb{E}_{\substack{s^* \sim \mathcal{D}_{\text{tr}}\\ \tilde{s} \sim q(\cdot \mid s^*)}}
\Big[
\mathbb{E}_{c \sim C_\phi(\cdot \mid \tilde{s}, x, \mathcal{D}_{\text{ref}})}
\big[
R_{\text{vfy}}(c, \mathcal{A}) \times R_\text{format,c}(c)
\big]
\Big].
\end{align}
We detail critique generation and format requirement in the supplementary material.
Note that our perturbation design, combined with element-level evaluation, mitigates reward hacking.
As each perturbation type is applied independently with probability $1/2$, 
trivial always-issue or never-issue critiques cannot consistently score high.

\textbf{Structural Planning.}
This task aims to produce actionable plans that translate user instructions into high-quality slides.
The goal is to approximately improve the \emph{planner proposal} term in Eq.~\eqref{eq:challenge}.
To this end, we train the planner to (i) generate a good plan proposal, and (ii) update a proposal using the critic's critique as a semantic gradient.
Formally, we refer to this goal of the planner as \emph{iterative refinement}, which covers \emph{initial proposal} (generating $z$ from scratch) and \emph{critique-guided refinement} (revising a suboptimal $\tilde z$ based on critique $c$).
The two subgoals can be expressed as a unified formulation.
Given instruction $x$, the reference $\mathcal{D}_{\text{ref}}$, and optionally a suboptimal plan $\tilde z$ to revise, with its corresponding critique $c$,
the planner aims to generate a high quality plan $z$ as
\begin{align}
\label{eq:plan}
z \sim \pi_\theta(\cdot \mid x, \mathcal{D}_{\text{ref}}, \underbrace{c \text{ or } \emptyset,\; \tilde z \text{ or } \emptyset}_{\text{optional}}).
\end{align}
The visual quality of a plan $z$ is measured with
a discriminative \emph{plan--visual matching reward}.
Specifically, a capable VLM judge (Claude Opus 4.5) is given the plan $z$ and a pair of visuals $(s^*, \tilde{s})$, and is asked to choose which one matches the
plan $z$ better. We reward $z$ if the judge prefers the gold slide $s^*$.
The judgment is conducted twice with the presenting order of the two visuals swapped to mitigate the positional bias~\cite{xu2026alternating}.
Given the presenting order $(s^*, \tilde s)$, we denote the judgment outcome as 
\begin{align*}
\hat o_\rightarrow = {\ones} \Big[ s^* \succ \tilde{s} \,\big|\, z, (s^*, \tilde s) \Big]  \in \{0, 1\},
\end{align*}
and $\hat o_\leftarrow$ is defined similarly for the swapped presenting order. Based on this,
we define the
\emph{plan--visual matching reward} as
\begin{align*}
R_{\text{pvm}}(z, s^*, \tilde{s})
&=
\frac{1}{2}\Big(
{\ones}[ \hat o_\rightarrow =1 ]
+
{\ones}[ \hat o_\leftarrow = 1]
\Big)
\end{align*}
To make $R_{\text{pvm}}$ reliable, we instruct the capable judge to base its decision \emph{solely} on fidelity to $z$ rather than its own aesthetic preference.
Note that $\tilde{s}$ is produced through structural perturbations that alter \emph{design styles} (e.g., color palette) instead of making the slide visually worse.
Without seeing $x$ and $\mathcal{D}_{\text{ref}}$, $\tilde{s}$ may look good on its own; therefore, correct \emph{judgment} requires a discriminative $z$.
As with the critic, we train the planner with DAPO to maximize the expected reward, combined with a format verification term:
{
\small
\begin{align*}
\mathcal{J}_{\text{plnr}}(\theta)
&= \mathbb{E}_{\substack{s^* \sim \mathcal{D}_{\text{tr}}\\ \tilde{s} \sim q(\cdot \mid s^*)}}
\Big[
\mathbb{E}_{c^* \sim \text{Oracle}(c)}
\big[
\mathbb{E}_{z \sim \pi_\theta(\cdot \mid x, \mathcal{D}_{\text{ref}}, c^*)}
[
R_{\text{pvm}}(z, s^*, \tilde{s}) \times R_\text{format,p}(z)
]
\big]
\Big].
\end{align*}
}%
We defer more details about plan generation and format requirements \textcolor{purple} to the supplementary material.
Note that the iterative refinement goal designed for the planner requires high-quality $(c^*, \tilde z)$ to train the agent so that it can reliably refine a given plan under critique.
To this end, we use an oracle VLM (GPT-5) to (i) describe the perturbed slide $\tilde{s}$ as a suboptimal plan $\tilde z$, and (ii) translate the discrepancy list $\mathcal{A}_{\text{diff}}$ (provided as \emph{hints}~\cite{zelikman2022star,zelikman2024quiet,li2025start}) into gold critique $c^*$:
\begin{align*}
\tilde z \sim \text{Oracle}(\tilde z \mid \tilde{s}),
\qquad
c^* \sim \text{Oracle}(c \mid \mathcal{A}_{\text{diff}}).
\end{align*}

\subsection{Theoretical Analysis}
\label{sec:theory}
We end up this section with the theoretical advantages of {\name}.
Full versions are deferred to the supplementary material due to page limit.

First, under regular conditions, optimizing the {\name} objective gives a gradient-level
approximation to that of the original PSP objective up to an explicit error bound, as formalized in the following Thm~\ref{thm:surrogate_informal}.

\begin{theorem}[Surrogate Validity for PSP, informal]
\label{thm:surrogate_informal}
Let $\widehat{\mathcal J}_{\text{SP}}(\theta,\phi)$ be the empirical loss of \name. Then there exists $\epsilon_{\mathrm{tot}}\ge 0$ such that
{
\small
\begin{align*}
\left\|
\nabla_\theta \widehat{\mathcal J}_{\text{SP}}(\theta,\phi)
-\frac{\beta}{4}\nabla_\theta \mathcal{J}(\theta)
\right\|
\le
\epsilon_{\mathrm{tot}}.
\end{align*}
}%
Here $\epsilon_{\mathrm{tot}}$ aggregates critic-calibration, structural-surrogate, variational-gap, and empirical optimization errors. Full definitions are provided in the supplementary material.
\end{theorem}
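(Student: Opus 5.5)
The plan is to relate both gradients to a common intermediate quantity: the gradient of a variational lower bound (ELBO) on $\log p_\theta(s^*\mid x,\mathcal D_{\text{ref}})$. Then $\epsilon_{\mathrm{tot}}$ is controlled by a triangle inequality over four links.

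\textbf{Step 1 (variational gap).} For any auxiliary distribution $r(z)$, Jensen's inequality gives $\log p_\theta(s^*\mid x,\mathcal D_{\text{ref}}) \ge \mathbb E_{r}[\log p(s^*\mid z)] - \mathrm{KL}(r\,\|\,\pi_\theta)$, with equality when $r$ is the posterior. Take $r$ to be the critique-refined planner output $\pi_\theta(\cdot\mid x,\mathcal D_{\text{ref}},c,\tilde z)$. The score-function identity $\nabla_\theta\mathcal J = \mathbb E_{z\sim p_\theta(z\mid s^*,\cdot)}[\nabla_\theta\log\pi_\theta(z\mid\cdot)]$ then shows that the ELBO gradient differs from $\nabla_\theta\mathcal J$ by a term bounded by $G\sqrt{2\,\mathrm{KL}(r\,\|\,\text{posterior})}$. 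This uses Pinsker's inequality and the assumption $\|\nabla_\theta\log\pi_\theta\|\le G$. Call this bound $\epsilon_{\mathrm{var}}$.

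\textbf{Step 2 (structural surrogate).} The intractable term $\log p(s^*\mid z)$ must be replaced by the pairwise reward $R_{\mathrm{pvm}}$. I would assume a Bradley--Terry-type link: the judge prefers $s^*$ over $\tilde s$ given $z$ with probability $\sigma(\beta[\log p(s^*\mid z)-\log p(\tilde s\mid z)])$. Linearizing $\sigma$ around $0$ gives $\sigma(u)\approx \tfrac12 + \tfrac u4$. Because $\tilde s\sim q(\cdot\mid s^*)$ is drawn independently of $\theta$, the $\tilde s$ term only contributes a gradient that vanishes in expectation, up to a baseline effect. The expected policy gradient of $R_{\mathrm{pvm}}$ therefore equals $\tfrac{\beta}{4}$ times the policy gradient of $\mathbb E[\log p(s^*\mid z)]$. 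This is where the factor $\beta/4$ comes from. The remainder is bounded by the second-order Taylor term of $\sigma$, $\sup|\sigma''|\cdot\beta^2 \Delta^2 G$, where $\Delta$ bounds the log-likelihood gap. The format multiplier equals $1$ on well-formed outputs, and any deviation is absorbed into this error, called $\epsilon_{\mathrm{struct}}$.

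\textbf{Step 3 (critic calibration).} The planner is trained with the oracle critique $c^*$, but at deployment it conditions on $c\sim C_\phi$. I would bound the gradient discrepancy by $2GR_{\max}\,\mathrm{TV}(C_\phi,\text{Oracle})$. The TV distance is in turn controlled by $1-\mathbb E[R_{\mathrm{vfy}}]$, under the assumption that element-level verification accuracy upper bounds critique disagreement. This gives $\epsilon_{\mathrm{cal}}$.

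\textbf{Step 4 (empirical error).} Replacing expectations with finite samples of $\mathcal D_{\mathrm{tr}}$, rollouts, and DAPO clipping costs $\epsilon_{\mathrm{opt}}$. This follows from a vector Bernstein or Hoeffding bound on bounded score-weighted rewards, plus a clipping bias term.

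Summing the four links gives $\epsilon_{\mathrm{tot}}=\epsilon_{\mathrm{cal}}+\epsilon_{\mathrm{struct}}+\tfrac\beta4\epsilon_{\mathrm{var}}+\epsilon_{\mathrm{opt}}$. The main obstacle is Step 2: justifying that the judge's pairwise preference is a monotone, near-linear link of the render log-likelihood ratio, and that the $\tilde s$ term contributes no systematic gradient. I would state the link as an explicit assumption and use the independence of the perturbation kernel $q$ from $\theta$ to handle the $\tilde s$ term.
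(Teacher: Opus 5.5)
Your overall architecture matches the paper's: a chain of triangle inequalities through intermediate objectives, the factor $\beta/4=\beta\sigma'(0)$ from linearizing the sigmoid, and a curvature remainder. However, two of your links fail as stated.

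\textbf{The $\tilde s$ term in Step 2 does not drop out.} With $m=\log p(s^*\mid z)-\log p(\tilde s\mid z)$, the linearized gradient contains $-\tfrac{\beta}{4}\,\mathbb{E}[\log p(\tilde s\mid z)\,\nabla_\theta\log\pi_\theta(z\mid x,\mathcal D_{\text{ref}})]$. Independence of $q$ from $\theta$ only lets you exchange $\nabla_\theta$ with $\mathbb{E}_{\tilde s}$. It does not make $\log p(\tilde s\mid z)$ a baseline, because a baseline must not depend on $z$. This term is not small in general. In the limiting case of no perturbation, $\tilde s=s^*$, so $m\equiv 0$, $R^*\equiv\tfrac12$, and the surrogate gradient vanishes identically, while $\tfrac{\beta}{4}\nabla_\theta\mathcal J_{\text{LB}}$ does not. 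Your Taylor remainder is also zero there, so your $\epsilon_{\mathrm{struct}}$ misses the entire discrepancy. The paper does not try to argue this term away. It assumes A1(b), $\|\nabla_\theta\mathbb{E}\,\mathbb{E}_{\tilde s\sim q_\rho}\mathbb{E}_{z\sim\pi_\theta}[\log p(\tilde s\mid z)]\|_2\le\eta$, and carries $\tfrac{\beta}{4}\eta$ as part of the structural-surrogate error. You need an assumption of this kind.

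\textbf{Steps 1 and 2 do not meet at a common vector, so the triangle inequality does not close.} Your Pinsker bound $G\sqrt{2\,\mathrm{KL}(r\,\|\,\text{posterior})}$ is valid for $\mathbb{E}_{z\sim r}[\nabla_\theta\log\pi_\theta(z\mid x,\mathcal D_{\text{ref}})]$. That is the ELBO gradient with $r$ held fixed, in which $\log p(s^*\mid z)$ plays no role. Step 2 instead produces the score-function gradient of $\mathbb{E}_z[\log p(s^*\mid z)]$ through the sampling distribution. These are different vectors: with an empty critique, $r=\pi_\theta(\cdot\mid x,\mathcal D_{\text{ref}})$ and the former vanishes identically. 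If you also differentiate through $r_\theta$, you get additional KL-gradient terms that neither step controls.

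The paper sidesteps this by taking the Jensen bound with $r=\pi_\theta$ itself, $\mathcal J_{\text{LB}}=\mathbb{E}_{z\sim\pi_\theta}[\log p(s^*\mid z)]$, which is exactly the object the sigmoid linearization targets. It then assumes $\|\nabla_\theta\mathcal J_{\text{LB}}-\nabla_\theta\mathcal J\|_2\le\epsilon_{\mathrm{vi}}$ (A3) rather than deriving it; no Pinsker-type argument is available for that gap.

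Likewise, the paper's $\delta_{\mathrm{cal}}$ is an assumed gradient gap (A2) between the critic-induced reward objective and $\mathcal J_{R^*}$, where $R^*=\sigma(\beta m)$ is your Bradley--Terry link. It is not a TV distance over critiques. A TV distance between distributions of free-text critiques would be close to $1$ regardless of $R_{\text{vfy}}$, so your Step 3 as stated would not yield a small $\epsilon_{\mathrm{cal}}$.
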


Crucially, the key error term is controlled thanks to our critic training on structural corruptions with verifiable ground truth supervision. An untrained or
pixel-denoising-trained C would not satisfy this bound in general.

Furthermore, Thm~\ref{thm:decomp_informal} below proves that \name trains the planner and critic \emph{separately}, thereby eliminating executor-induced noise.

\begin{theorem}[Two-agent Decomposition Stabilizes Optimization, informal]
\label{thm:decomp_informal}
Let $\hat g_{\text{e2e}}$ and $\hat g_{\text{2a}}$ denote the end-to-end and two-agent policy-gradient estimators, respectively; see the supplementary material for explicit definitions and derivation.
Then
{\small
\begin{align*}
\mathrm{Var}(\hat g_{\text{e2e}})
=
\mathrm{Var}(\hat g_{\text{2a}})
+
\Delta_{\text{exec}\mid c},
\quad
\Delta_{\text{exec}\mid c}\ge 0.
\end{align*}
}%
So $\mathrm{Var}(\hat g_{\text{2a}})\le \mathrm{Var}(\hat g_{\text{e2e}})$, with strict inequality for non-zero executor-induced noise.
See formal statement, expression, and imperfect-critic extension in the supplementary material.
\end{theorem}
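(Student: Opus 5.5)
The proof will be a law of total variance argument applied to a single policy-gradient sample. For a fixed training triple $(x,s^*,\D_\text{ref})$ and perturbation $\tilde s\sim q(\cdot\mid s^*)$, I model the end-to-end estimator as a score-function estimator. It draws $z\sim\pi_\theta$, executes $s=E(z)$ with the black-box executor, scores $s$ with a reward, and forms $\hat g_{\text{e2e}} = R(z,s)\,\nabla_\theta\log\pi_\theta(z\mid\cdot)$. In the two-agent estimator the planner's reward is computed from the verifiable structural signal: the critique $c$ (ideally $c^*$ from the oracle) and the plan--visual matching reward on the fixed pair $(s^*,\tilde s)$. It never passes through a fresh execution $E(z)$. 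The structural assumption I will state and use is the following: conditional on $(z,c)$, the two-agent reward equals the conditional expectation of the end-to-end reward over executor randomness,
\begin{align*}
R_{\text{2a}}(z,c)=\mathbb{E}_{s\sim p(\cdot\mid z)}\big[R(z,s)\,\big|\,z,c\big].
\end{align*}
This is a Rao--Blackwellization: the critic, trained on structural corruptions with known ground truth, already summarizes what the executor's output would reveal. The conditional independence $p(s\mid z,x,\D_\text{ref})=p(s\mid z)$ from Sec.~\ref{sec:method:form} justifies integrating out $s$ given $z$.

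With this in place, $\hat g_{\text{2a}}=\mathbb{E}[\hat g_{\text{e2e}}\mid z,c]$, because the score $\nabla_\theta\log\pi_\theta(z\mid\cdot)$ is $(z,c)$-measurable and factors out of the conditional expectation. The law of total variance, applied componentwise or to the trace for vector gradients, then gives
\begin{align*}
\mathrm{Var}(\hat g_{\text{e2e}})=\mathrm{Var}\big(\mathbb{E}[\hat g_{\text{e2e}}\mid z,c]\big)+\mathbb{E}\big[\mathrm{Var}(\hat g_{\text{e2e}}\mid z,c)\big].
\end{align*}
The first term is $\mathrm{Var}(\hat g_{\text{2a}})$. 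I define $\Delta_{\text{exec}\mid c}:=\mathbb{E}\big[\|\nabla_\theta\log\pi_\theta\|^2\,\mathrm{Var}_{s\sim p(\cdot\mid z)}(R(z,s)\mid z,c)\big]\ge 0$, which equals the second term in trace form. This gives the claimed identity, and both estimators are unbiased for the same expected gradient by the tower property. For the strict inequality, $\Delta_{\text{exec}\mid c}$ is the expectation of a product of nonnegative factors. It is therefore positive exactly when executor-induced reward variance is nonzero on a set of plans of positive $\pi_\theta$-probability where the score is nonzero. This is what "non-zero executor-induced noise" will be taken to mean.

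The main obstacle is justifying the Rao--Blackwell identity. In practice the critic is imperfect, and $R_{\text{pvm}}$ is not literally the conditional mean of an execution-based reward. I will first prove the clean statement under the perfect-critic assumption, where $c=c^*$ is recovered from $\mathcal{A}_{\text{diff}}$. For the imperfect-critic extension, I write $R_{\text{2a}}=\mathbb{E}[R\mid z,c]+b(z,c)$ with critic bias $b$ and expand the variance. This yields $\mathrm{Var}(\hat g_{\text{2a}})\le\mathrm{Var}(\mathbb{E}[\hat g_{\text{e2e}}\mid z,c])+2\sqrt{\mathrm{Var}(\cdot)\,\mathbb{E}[b^2\|\nabla\log\pi\|^2]}+\mathbb{E}[b^2\|\nabla\log\pi\|^2]$. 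The variance reduction then survives whenever the critic calibration error, the same quantity that appears in $\epsilon_{\mathrm{tot}}$ of Thm~\ref{thm:surrogate_informal}, is small relative to $\Delta_{\text{exec}\mid c}$.
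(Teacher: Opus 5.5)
Your proposal is correct and is essentially the paper's proof: the paper defines $\hat g_{\text{2a}}$ directly through $\bar r(z,c,x)=\mathbb E[r\mid z,c,x]$, so your Rao--Blackwell identity holds by construction rather than as an assumption. It then applies the law of total variance conditioned on $(z,c)$ exactly as you do, with a baseline $b(x,c)$ that factors out like the score and a finite-second-moment assumption that you should also state. Your strictness condition, which requires the score to be nonzero where $\mathrm{Var}(r\mid z,c)>0$, is slightly more careful than the paper's, which omits the nonzero-score requirement.
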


In general, even if a planner-executor-critic design is adopted, standard training of the planner and critic still requires the executor to generate visuals for complete rollouts, retaining its induced noise.
{\name} provides a way to bypass the executor in our decomposed training, thereby enabling the noise reduction.

\section{Experiment}
\label{sec:exp}

We evaluate the performance of {\name} and ablate its components' contributions. 
Benefiting from the principled optimization presented before, {\name} offers strong PSP capability over strong baselines that rely on much larger GPT models.

\subsection{Dataset and Experiment Settings}

\textbf{Datasets.}
We build the \textbf{train} and \textbf{test} data from Zenodo10k~\cite{zheng2025pptagent} by randomly sampling 200 decks. For each deck, we perform a slide-level held-out split that preserves the deck’s chronological order. 
The last 20\% of slides are reserved for testing and the remaining slides are used for training (or for retrieval for training-free baselines).
SlideBench~\cite{ge2025autopresent} is used as \textbf{out-of-distribution} test data. 
We defer more data preparation details in the supplementary material.

\textbf{Backbones.}
Both the critic and the planner are fine-tuned from Qwen2.5-VL-7B-Instruct~\cite{bai2025qwen25vl}.
At inference time, the planner and critic collaborate with a black-box executor $E$ to perform multi-round iterative generation.
We use GPT-o4-mini as the coding executor to implement the plan with \texttt{python-pptx}, and follow \cite{ge2025autopresent,tang2025slidecoder} to improve coding reliability. No template is used in execution.

\textbf{Baselines.}
We compare \name\ against representative systems that cover both slide-native pipelines and generic visual synthesis.
We include {AutoPresent}~\cite{ge2025autopresent}, which generates a slide by deciding layout and style for the given instruction from scratch on its own.
We also include {PPTAgent}~\cite{zheng2025pptagent}, which selects a template/style from the reference slides and re-applies it to the target content through an edit-based workflow.
Finally, following \cite{ge2025autopresent}, we include a generic image generation baseline using {Stable Diffusion 3.5}~\cite{esser2024scaling} equipped with IP-Adapter~\cite{ye2023ip} which prompt the model to synthesize a slide-like image.
To isolate the benefit of our training, we additionally report PSP results with planner/critic setting to: (i) frontier VLM (o4-mini) and (ii) the corresponding base model (without fine-tuning), while keeping the rest of the pipeline unchanged.

\textbf{Evaluations.}
We evaluate the generation in terms of visual similarity against the gold slides, and reference-free quality.
Following \cite{ge2025autopresent,tang2025slidecoder,liu2025presenting}, we adopt both visual-similarity metrics and a VLM-as-a-Judge protocol. 
Specifically, for visual similarity, we treat the slides as rendered images and utilize metrics such as SSIM and CLIP to measure ``reconstruction'' quality.  
For VLM-as-a-judge evaluation, we follow \cite{ge2025autopresent,zhu2026paperbanana,liu2025presenting} and evaluate the generation quality from the perspectives of \emph{faithfulness}, \emph{color}, \emph{layout}, and \emph{overall aesthetics}. 
We detail these in the supplementary material.

\subsection{Quantitative Results}
\label{sec:exp:quant}

\begin{table*}[t!]
\caption{
{Quantitative results on \emph{test} and \emph{OOD} pages.}
Best average results are in {\bf bold} and the second best results are \underline{underlined}. 
}
\label{tab:main}

\centering
\resizebox{1.0\linewidth}{!}{%
\renewcommand{\tabcolsep}{3pt}
\begin{tabular}{
r
cc
>{\columncolor{gray!10}}c
cccc
>{\columncolor{gray!10}}c
}
\toprule[0.4ex]
\multirow{2}{*}{\bf Model} & \multicolumn{3}{c}{\bf Visual Similarity} & \multicolumn{5}{c}{\bf VLM-as-Judge} \\
\cmidrule[0.2ex](lr){2-4} \cmidrule[0.2ex](lr){5-9}
& $\mathrm{Sim}_{\text{ssim}} \uparrow$ & $\mathrm{Sim}_{\text{clip}}\uparrow$ & \bf AVG
& Faith $\uparrow$ & Color $\uparrow$ & Layout $\uparrow$ & Aest $\uparrow$ & \bf AVG \\
\midrule[0.2ex]

\multicolumn{9}{c}{\textbf{Test Pages}} \\
\midrule[0.2ex]
& \multicolumn{8}{c}{\emph{GPT-based Models}} \\
\noalign{\vskip 0.2ex}\cdashline{2-9}\noalign{\vskip 0.2ex}
AutoPresent~\cite{ge2025autopresent} & 0.6062 & 0.4431 & 0.5247 & 0.6174 & {0.5850} & 0.4145 & {0.4105} & \underline{0.5069} \\
PPTAgent~\cite{zheng2025pptagent}    & 0.5126 & 0.5491 & 0.5309 & 0.1036 & 0.5670 & {0.4364} & 0.3961 & 0.3758 \\
PSP (o4-mini)                        & {0.8440} & {0.7683} & \textbf{0.8062} & {0.5504} & 0.5763 & 0.4206 & 0.3661 & 0.4784 \\
\noalign{\vskip 0.2ex}\cdashline{2-9}\noalign{\vskip 0.2ex}
& \multicolumn{8}{c}{\emph{7B-level Models}} \\
\noalign{\vskip 0.2ex}\cdashline{2-9}\noalign{\vskip 0.2ex}
SD 3.5~\cite{esser2024scaling}       & 0.3372 & 0.4496 & 0.3934 & 0.1559 & 0.4282 & 0.2545 & 0.1889 & 0.2569 \\
PSP (Base)                           & {0.3578} & 0.3317 & 0.3448 & 0.2995 & 0.4588 & 0.2956 & 0.2382 & 0.3230 \\
\noalign{\vskip 0.2ex}\cdashline{2-9}\noalign{\vskip 0.2ex}
{\name} (Ours)                       & {0.8134} & {0.7018} & \underline{0.7576} & {0.7072} & {0.6330} & {0.4470} & {0.3787} & \textbf{0.5415} \\
\midrule[0.2ex]

\multicolumn{9}{c}{\textbf{OOD Pages}} \\
\midrule[0.2ex]
& \multicolumn{8}{c}{\emph{GPT-based Models}} \\
\noalign{\vskip 0.2ex}\cdashline{2-9}\noalign{\vskip 0.2ex}
AutoPresent~\cite{ge2025autopresent} & 0.5976 & 0.5892 & 0.5934 & 0.7923 & 0.7242 & 0.5477 & 0.5201 & \underline{0.6461} \\
PPTAgent~\cite{zheng2025pptagent}    & 0.4836 & 0.6371 & 0.5604 & 0.0839 & 0.5824 & 0.4392 & 0.3831 & 0.3722 \\
PSP (o4-mini)                        & 0.7233 & 0.7633 & {\bf 0.7433} & 0.6975 & 0.5947 & 0.4472 & 0.3958 & 0.5338 \\

\noalign{\vskip 0.2ex}\cdashline{2-9}\noalign{\vskip 0.2ex}
& \multicolumn{8}{c}{\emph{7B-level Models}} \\
\noalign{\vskip 0.2ex}\cdashline{2-9}\noalign{\vskip 0.2ex}
SD 3.5~\cite{esser2024scaling}       & 0.3177 & 0.5753 & 0.4465 & 0.1811 & 0.5075 & 0.2934 & 0.2246 & 0.3016 \\
PSP (Base)                           & 0.3910 & 0.3864 & 0.3887 & 0.2432 & 0.2309 & 0.1716 & 0.1292 & 0.1937 \\
\noalign{\vskip 0.2ex}\cdashline{2-9}\noalign{\vskip 0.2ex}
{\name} (Ours)                       & 0.6785 & 0.6954 & \underline{0.6870} & 0.9330 & 0.7545 & 0.6565 & 0.5891 & {\bf 0.7333} \\

\bottomrule[0.4ex]
\end{tabular}
}
\end{table*}

Tab~\ref{tab:main} shows quantitative results on test and out-of-distribution (OOD) pages. 
We note the following observations.

\textbf{Good PSP Performance.}
From the table, {\name} achieves the strongest overall performance, outperforming the GPT-based AutoPresent (0.5069) and PSP (o4-mini) (0.4784) in terms of judge score, while maintaining a highly competitive visual similarity score (0.7414), substantially higher than the 7B-level baselines SD~3.5 and PSP (Base). This performance is notable given that {\name} relies on only two 7B-scale agents, yet achieves competitive performance compared to GPT-based components. 
We also note empirical failures in the baselines, attributed to their underlying mechanisms. 
PPTAgent struggles with faithfulness (0.1036), as realistic reference sets rarely offer perfect templates without structural loss\footnote{In reality, \texttt{pptx} template files may not be accessible.}.
AutoPresent falls short when the verbose prompting is too coarse.
\emph{These trends clearly support our PSP formulation.}
The suboptimal results of the training-free PSP baselines highlight the necessity of preference-aligned optimization; without it, the critic leverages its own generic preferences rather than reflecting the user's contextual needs. 
\emph{This confirms that {\name}'s gains stem from the RL training, not merely the multi-agent refinement loop.}

\textbf{Strong Generalizability.}
On OOD pages, {\name} demonstrates strong generalizability and achieves the highest judge average, substantially outperforming baselines such as AutoPresent, PSP (o4-mini), PPTAgent, SD~3.5, and PSP (Base). 
Our method obtains the best scores across all judging dimensions and the second-best visual similarity. 
These results indicate that {\name} does not simply memorize deck-specific patterns from the training corpus, but instead successfully leverages the reference slides as contextual evidence at inference time, allowing it to infer the appropriate design intent in unseen scenarios.

\textbf{Metric Discrepancy.}
We also note that visual similarity and judge-based scores are not perfectly aligned. 
For instance, while PSP (o4-mini) achieves the highest visual averages across both in-distribution and OOD settings, its judge-based quality remains substantially lower than that of {\name}. 
This mismatch, as explained in Sec.~\ref{sec:method}, highlights the challenge of delivering PSP by optimizing standardized metrics alone---a limitation also noted in the literature~\cite{ge2025autopresent,qu2025igd,xu2025pregenie}.

These results collectively demonstrate the effectiveness of the proposed {\name}.

\ExplSyntaxOn
\cs_set:Npn \loadimage #1#2#3#4 {
    \seq_clear:N \l_tmpa_seq
    \int_step_inline:nnn {#3} {#3 + #4 - 1} {
        \file_if_exist:nTF {#1/##1.pdf} 
        {
            \seq_put_right:Nx \l_tmpa_seq {
                \exp_not:N \adjustbox {valign=m} {
                    \exp_not:N \parbox [c] [\exp_not:N \figheight] [c] {\exp_not:N \figwidth} {
                        \exp_not:N \centering
                        \exp_not:N \includegraphics [max~width=\exp_not:N \figwidth, max~height=\exp_not:N \figheight, keepaspectratio, \exp_not:n {#2}] {#1/##1.pdf}
                    }
                }
            }
        }
        {
            \seq_put_right:Nx \l_tmpa_seq {
                \exp_not:N \adjustbox {valign=m} {
                    \exp_not:N \parbox [c] [\exp_not:N \figheight] [c] {\exp_not:N \figwidth} {
                        \exp_not:N \centering
                        \exp_not:N \includegraphics [max~width=\exp_not:N \figwidth, max~height=\exp_not:N \figheight, keepaspectratio, \exp_not:n {#2}] {example-image-a}
                    }
                }
            }
        }
    }
    \seq_use:Nn \l_tmpa_seq {&}
}
\ExplSyntaxOff

\begin{figure*}[tb!] 
\centering
\renewcommand{\tabcolsep}{1.3pt}
\def\figwidth{0.11\textwidth} 
\def\figheight{1.2cm} 

\newcommand{\rowgroup}[1]{
    \adjustbox{rotate=90}{{#1}}
}

\begin{tabular}{c c ccccc cc} 
\toprule[0.4ex]

& \scriptsize Ref 
& \scriptsize AutoPre. 
& \scriptsize PPTAgent 
& \scriptsize SD 3.5 
& {\scriptsize \begin{tabular}[c]{@{}c@{}}PSP\\(o4-mini)\end{tabular}} 
& {\scriptsize \begin{tabular}[c]{@{}c@{}}PSP\\(Base)\end{tabular}} 
& \scriptsize Ours 
& \scriptsize Gold \\
\noalign{\vskip 0.5ex}
\midrule[0.2ex]

\multirow{2}{*}[-0.85cm]{\rowgroup{Test}} 
& \loadimage{figures/12_s_122_53}{width=\figwidth, height=\figheight, keepaspectratio}{0}{8} \\
& \loadimage{figures/63_s_115_8}{width=\figwidth, height=\figheight, keepaspectratio}{0}{8} \\
& \loadimage{figures/93_s_152_9}{width=\figwidth, height=\figheight, keepaspectratio}{0}{8} \\

\noalign{\vskip 0.2ex}\cdashline{2-9}\noalign{\vskip 0.2ex}

\multirow{2}{*}[-0.45cm]{\rowgroup{OOD}} 
& \loadimage{figures/social_media_21}{width=\figwidth, height=\figheight, keepaspectratio}{0}{8} \\
& \loadimage{figures/business_2}{width=\figwidth, height=\figheight, keepaspectratio}{0}{8} \\

\bottomrule[0.4ex]
\end{tabular}
\caption{
Qualitative comparison of slide generation results.
}
\label{fig:main}
\end{figure*}

\subsection{Qualitative Results}
\label{sec:exp:qual}

We provide a visual comparison of the generated slides across different methods.
Illustrative results are shown in Fig~\ref{fig:main}, with more examples deferred to the supplementary material. 
We note that {\name} consistently produces visually appealing and structurally coherent slides that closely align with the gold slides, 
while baselines often struggle to maintain visual fidelity or fail to generate valid structures.

The effectiveness of {\name}, stemming from our preference-aligned RL training, empowers the model to explicitly infer latent user intent. This capability manifests in two aspects. 
First, {\name} achieves superior color coherence. 
For instance, as shown in Row 4 and 5, {\name} accurately \emph{infers} a proper coloring strategy that differs from the reference slide in OOD scenarios. 
Second, {\name} demonstrates strong spatial reasoning for layout arrangement.
In Row 2, it successfully organizes complex, multi-image visual assets (e.g., heatmaps) into a clean, aligned structure that matches the gold slide perfectly. In Row 3, it also correctly positions the logo and text blocks.
This proactive inference capability allows {\name} to surpass baselines solely relying on verbose instructions (AutoPresent) or generic templates (PPTAgent). When such explicit inputs are unavailable, these methods inevitably fail to capture the user's true intent.

\subsection{Ablation Study}
\label{sec:exp:abla}

\begin{table*}[t!]
\caption{
{Ablation study on {\name} Training and Iterative Revision}. Both components are necessary for high-quality generation.
}
\label{tab:abla:spire}
\centering
\resizebox{1.0\linewidth}{!}{%
\renewcommand{\tabcolsep}{4pt}
\begin{tabular}{
r
cc
>{\columncolor{gray!10}}c
cccc
>{\columncolor{gray!10}}c
}
\toprule[0.4ex]
\multirow{2}{*}{\bf Model} & \multicolumn{3}{c}{\bf Visual Similarity} & \multicolumn{5}{c}{\bf VLM-as-Judge} \\
\cmidrule[0.2ex](lr){2-4} \cmidrule[0.2ex](lr){5-9}
& $\mathrm{Sim}_{\text{ssim}} \uparrow$ & $\mathrm{Sim}_{\text{clip}}\uparrow$ & \bf AVG
& Faith $\uparrow$ & Color $\uparrow$ & Layout $\uparrow$ & Aest $\uparrow$ & \bf AVG \\
\midrule[0.2ex]
w/o FT  & 0.3578 & 0.3317 & 0.3448 & 0.2995 & 0.4588 & 0.2956 & 0.2382 & 0.3230 \\
w/o Revision & 0.7299 & 0.6096 & 0.6698 & 0.4865 & 0.5120 & 0.3386 & 0.2779 & 0.4038 \\
\noalign{\vskip 0.2ex}\cdashline{2-9}\noalign{\vskip 0.2ex}
{\name}  & {0.8134} & {0.7018} & {0.7576} & {0.7072} & {0.6330} & {0.4470} & {0.3787} & {0.5415} \\
\bottomrule[0.4ex]
\end{tabular}
}
\end{table*}

\begin{table*}[t!]
\caption{
{Ablation study on the critic role.} 
Substituting the larger, untrained GPT critic with {\name}-trained 7B model results in improved performance.
}
\label{tab:abla:critic}

\centering
\resizebox{1.0\linewidth}{!}{%
\renewcommand{\tabcolsep}{4pt}
\begin{tabular}{
r
cc
>{\columncolor{gray!10}}c
cccc
>{\columncolor{gray!10}}c
}
\toprule[0.4ex]
\multirow{2}{*}{\bf Model} & \multicolumn{3}{c}{\bf Visual Similarity} & \multicolumn{5}{c}{\bf VLM-as-Judge} \\
\cmidrule[0.2ex](lr){2-4} \cmidrule[0.2ex](lr){5-9}
& $\mathrm{Sim}_{\text{ssim}} \uparrow$ & $\mathrm{Sim}_{\text{clip}}\uparrow$ & \bf AVG
& Faith $\uparrow$ & Color $\uparrow$ & Layout $\uparrow$ & Aest $\uparrow$ & \bf AVG \\
\midrule[0.2ex]
w/ o4-mini Critic    & 0.8440 & 0.7683 & 0.8062 & 0.5504 & 0.5763 & 0.4206 & 0.3661 & 0.4784 \\
w/ {\name} Critic    & 0.9367 & 0.8633 & 0.9000 & 0.5777 & 0.5905 & 0.4515 & 0.3942 & 0.5035 \\
\bottomrule[0.4ex]
\end{tabular}
}
\end{table*}

\begin{figure}[t] 
\centering
\begin{subfigure}[b]{0.49\textwidth}
    \centering
    \includegraphics[width=\textwidth]{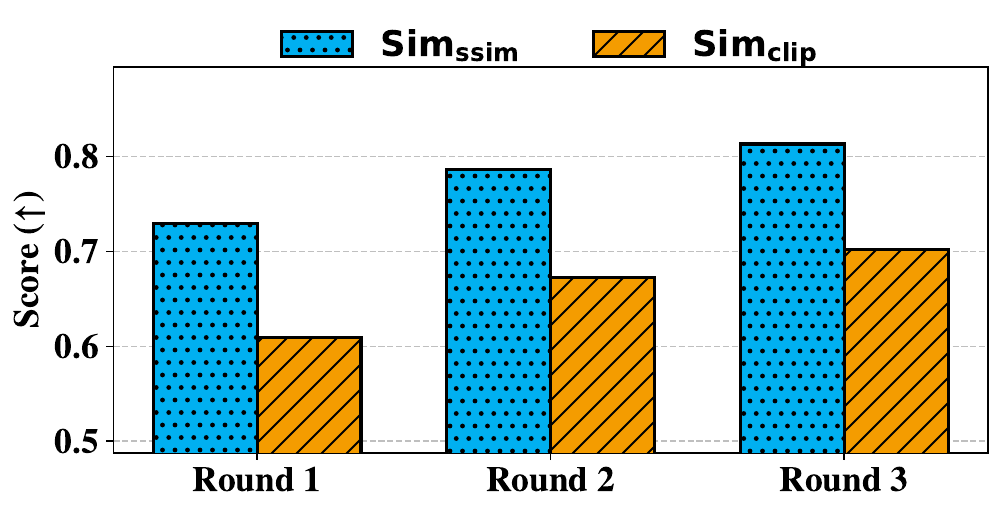}
    \caption{Visual similarity.}
    \label{fig:abla:vis}
\end{subfigure}
\begin{subfigure}[b]{0.49\textwidth}
    \centering
    \includegraphics[width=\textwidth]{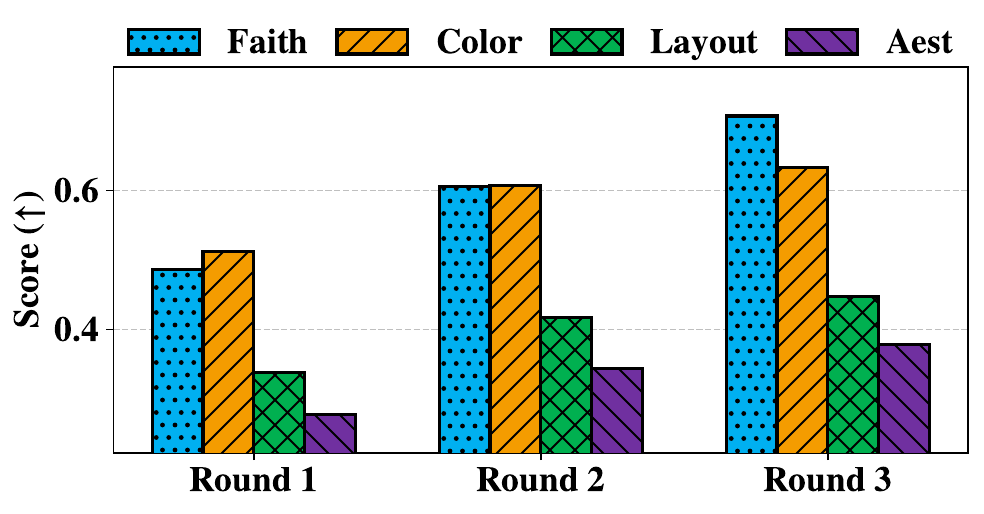}
    \caption{Judge scores.}
    \label{fig:abla:vlm}
\end{subfigure}
\caption{
Ablation on {\name} revision.
Multi-round revision in {\name} helps improve both visual similarity and judge scores, indicating better generation qualities. 
}
\label{fig:abla:revision}
\end{figure}

We end up this section with ablation study on key components in {\name}.

\textbf{{\name} Training and Iterative Revision.}
To break down the contributions of our proposed framework, we ablate the fine-tuning process and the multi-turn revision mechanism, with results reported in Tab.~\ref{tab:abla:spire}.
First, we observe that preference-aligned fine-tuning is necessary for PSP. Namely, when directly prompting base Qwen models to conduct PSP, they struggle significantly and achieve a visual average of only 0.3344 and a judge average of 0.3230. In contrast, our fine-tuned {\name} achieves much better scores, reaching 0.7414 and 0.5415 respectively. This massive performance gap confirms that off-the-shelf VLMs, especially small-scale models, cannot reliably infer latent page-level design intents without our targeted structural denoising optimization.
Second, the results validate the effectiveness of the iterative revision process. 
As shown in the table, and Fig.~\ref{fig:abla:revision},
the visual similarity scores exhibit a clear and steady improvement through the successive refinement rounds. This steady climb is a direct result of our structural denoising objective, which explicitly trains the planner to interpret and execute structural corrections based on the critic's feedback. Consequently, the planner is capable of progressively refining suboptimal layouts, proving that {\name} successfully leverages high-quality critiques to iteratively enhance the design rather than relying on a single-pass generation.

\textbf{Critic Role.}
One hypothesis stated in Sec.~\ref{sec:exp:quant} is that the suboptimal performance of PSP lies in its critic. 
To verify that the preference-aligned training helps the critic learn the user's contextual needs better, we replace the critic in the PSP (o4-mini) baseline with our 7B-level {\name}-trained critic and report the results on test slides.
Results are reported in Tab~\ref{tab:abla:critic}. From the table, we see that this substitution yields consistent improvements across all metrics. Remarkably, the critic is a 7B-level model, which is much less capable than GPT models. This gain confirms that such a targeted critic indeed helps guide the PSP process.

\section{Related Work}
\label{sec:relate}

Early works consider automated slide generation as a text summarization task~\cite{sun2021d2s,fu2022doc2ppt,cao2025multi2},
aiming to extract~\cite{sun2021d2s,fu2022doc2ppt} and summarize~\cite{cao2025multi2} proper deck-level content from a given document for presentation. These solutions boiled down to extracting salient sentences, figures, and sections from source documents to organize them into slide outlines. 
However, they overlooked the necessity of coherent layout design and the inherent multimodal nature of slides~\cite{zeng2025slidetailor,liang2025slidegen}. 
As a result, these solutions offered limited control over the aesthetic perspective of the generated slides, leaving personalized generation untouched.

Following the emergence of (M)LLMs, recent research has shifted toward end-to-end and agentic pipelines for slide design~\cite{mondal2024presentations,ge2025autopresent, zheng2025pptagent,zeng2025slidetailor,pang2025paper,xie2025slidebot,jung2025talk,jang2026deckbench}.
Representative solutions decompose the task into modular stages, including content outlining, asset extraction, layout arrangement, and iterative refinement. 
These systems improve visual consistency through template-based selection, visual feedback, and multi-agent coordination~\cite{mondal2024presentations,zheng2025pptagent,zeng2025slidetailor,pang2025paper,xie2025slidebot,jung2025talk}.
Nevertheless, these solutions mainly focus on more global deck- or document-level decisions and planning, lacking more fine-grained page-level layout design capabilities. 
For such page-level design, these solutions are either guided by generic templates and system-defined objectives~\cite{zheng2025pptagent,zeng2025slidetailor}, or by explicit and lengthy instructions specified by users~\cite{ge2025autopresent,pang2025paper}.
These designs inherently limit their performance for PSP, which requires deep understanding of user-specific design intents.

Very recent works have explored personalization for slide generation. However, existing works mostly focus on adapting content and narrative structure to different audiences or presentation contexts~\cite{jung2025talk,zheng2025pptagent,zeng2025slidetailor}.
\cite{mondal2024presentations} enables more tailored communication by conditioning on audience profiles or example pairs,
and \cite{zeng2025slidetailor} allows users to specify a document--slide deck pair to express their preference, with the system aiming to replicate reference slides while plugging in content from the user's own document. 
However, these designs still lack the ability to create novel page-level designs tailored to the user's intent. 
In this work, we propose {\name} to learn a user's page-level preferences from their visual data, pushing agentic slide generation to a more fine-grained page-level design task.
\section{Conclusion}
\label{sec:conclusion}

This paper studies Page-level Slide Personalization (PSP), a key aspect of agentic slide generation that remains largely underexplored. Broadly, we show that personalized visual generation is fundamentally a latent-intent inference problem. We formulate PSP as an inverse planning problem to infer a user's latent design intent, which can be used to guide diverse executors to render the visuals. We propose {\name}, which leverages structural denoising to construct a tractable surrogate objective for optimization. By intentionally corrupting the visual structures of gold slides, {\name} creates a self-supervised denoising task whereby two agents are trained to iteratively refine the design plan. We provide theoretical analysis on how structural denoising serves as a consistent surrogate objective for PSP, and how our multi-agent formulation strictly reduces policy gradient variance to stabilize the optimization. Extensive experiments demonstrate the effectiveness of our method against representative baselines for the PSP task.

\section*{Acknowledgements}
This work is supported in part by the US National Science Foundation under grant NSF IIS-2141037. Any opinions, findings, and conclusions or recommendations expressed in this material are those of the author(s) and do not necessarily reflect the views of the National Science Foundation.

\bibliographystyle{splncs04}
\bibliography{reference}

\appendix
\clearpage
\title{Supplementary Material of \\ Personalization as Inverse Planning: Learning Latent Design Intents for Agentic Slide Generation via Structural Denoising}
\author{
Tianci Liu\inst{1,*} \and
Zihan Dong\inst{2} \and
Linjun Zhang\inst{2} \and
Haoyu Wang\inst{3} \and
Jing Gao\inst{1} \and
Emre K\i c\i man\inst{4} \and
Ranveer Chandra\inst{4} \and
Wei-Ting Chen\inst{4}
}

\authorrunning{T.~Liu et al.}

\institute{
Purdue University \and 
Rutgers University \and 
University at Albany \and
Microsoft
}
\maketitle
\def\thefootnote{*}\footnotetext{Work done during an internship at Microsoft.}\def\thefootnote{\arabic{footnote}}

\section{Theoretical Analysis Details}
\label{app:theory}

This section provides formal statements and proofs for the two informal theorems in the main text.
We only keep the theorem-specific assumptions where each theorem is stated.

\subsection{Preliminaries and Notation}
\label{app:theory:notation}

Each training instance is $(x,s^*,\mathcal D_{\text{ref}})\sim\mathcal D_{\text{tr}}$.
The planner samples a latent plan $z\sim\pi_\theta(\cdot\mid x,\mathcal D_{\text{ref}})$.
Unless explicitly stated otherwise, $\|\cdot\|_2$ denotes $\ell_2$ norm on vectors.
For clarity, we use three objectives throughout this section:
\begin{itemize}
\item $\mathcal J(\theta)$: the original PSP objective in Equation~\eqref{eq:formal}:
\[
\mathcal J(\theta)
=
\mathbb{E}_{(x,s^*,\mathcal D_{\text{ref}})\sim \mathcal D_{\text{tr}}}
\left[
\log \int \pi_\theta(z\mid x,\mathcal D_{\text{ref}})\, p(s^*\mid z)\, dz
\right].
\]
\item $\mathcal J_{\text{LB}}(\theta)$: the Jensen lower bound of $\mathcal J(\theta)$ (i.e., $\mathcal J(\theta)\ge \mathcal J_{\text{LB}}(\theta)$):
\begin{equation}
\mathcal J_{\text{LB}}(\theta):=
\mathbb{E}_{(x,s^*,\mathcal D_{\text{ref}})}
\mathbb{E}_{z\sim\pi_\theta(\cdot\mid x,\mathcal D_{\text{ref}})}
\left[\log p(s^*\mid z)\right].
\label{eq:jlb}
\end{equation}
\item $\mathcal J_{\text{SP}}(\theta,\phi)$: the structural-perturbation surrogate optimized in training:
\begin{equation}
\mathcal J_{\text{SP}}(\theta,\phi):=
\mathbb{E}_{(x,s^*,\mathcal D_{\text{ref}})}
\mathbb{E}_{\tilde s\sim q_\rho(\cdot\mid s^*)}
\mathbb{E}_{z\sim\pi_\theta(\cdot\mid x,\mathcal D_{\text{ref}})}
\left[\widehat R_\phi(z,\tilde s)\right],
\label{eq:jsp}
\end{equation}
where $\widehat R_\phi$ is the learned critic-induced reward.
\end{itemize}

For structural perturbation, sample $\tilde s\sim q_\rho(\cdot\mid s^*)$.
Define
\begin{equation}
m(z,\tilde s):=\log p(s^*\mid z)-\log p(\tilde s\mid z),
\qquad
R^*(z,\tilde s):=\sigma(\beta\,m(z,\tilde s)).
\label{eq:margin_rstar}
\end{equation}
Here $\rho$ controls perturbation strength, $\beta>0$, and $\sigma(\cdot)$ is sigmoid.

In practice, we optimize its empirical estimator
\[
\widehat{\mathcal J}_{\text{SP}}(\theta,\phi),
\]
which is the empirical loss implemented by the critic and planner objectives in the main text.

For variance analysis with critique conditioning, define
\begin{equation}
H(z,c;x):=\nabla_\theta\log\pi_\theta(z\mid x,\mathcal D_{\text{ref}},c).
\label{eq:score_def}
\end{equation}

\subsection{Details of Informal Theorem 1 (Surrogate Consistency)}
\label{app:theory:thm1_details}
\begin{assumption}[A1 (Regularity)]
(a)~$|\beta\,m(z,\tilde s)|\le M$ for sampled $(z,\tilde s)$.
\quad
(b)~$\left\|\,\nabla_\theta\,
\mathbb{E}_{(x,s^*,\mathcal D_{\text{ref}})}
\mathbb{E}_{\tilde s\sim q_\rho}
\mathbb{E}_{z\sim\pi_\theta}
[\log p(\tilde s\mid z)]\,\right\|_2\le \eta$.
\end{assumption}

\begin{assumption}[A2 (Critic-calibration gradient bound)]
\[
\left\|\,\nabla_\theta \mathcal J_{\text{SP}}(\theta,\phi)-\nabla_\theta \mathcal J_{R^*}(\theta)\,\right\|_2\le\delta_{\mathrm{cal}},
\]
where $\mathcal J_{R^*}(\theta):=\mathbb{E}_{(x,s^*,\mathcal D_{\text{ref}})}\mathbb{E}_{\tilde s\sim q_\rho}\mathbb{E}_{z\sim\pi_\theta}[R^*(z,\tilde s)]$.
\end{assumption}

\begin{assumption}[A3 (Gradient gap bounds)]
Under the joint sampling path
\[
\begin{aligned}
(x,s^*,\mathcal D_{\text{ref}})&\sim\mathcal D_{\text{tr}},\qquad
\tilde s\sim q_\rho(\cdot\mid s^*),\\
c&\sim C_\phi(\cdot\mid \tilde s,x,\mathcal D_{\text{ref}}),\qquad
z\sim \pi_\theta(\cdot\mid x,\mathcal D_{\text{ref}},c),
\end{aligned}
\]
the empirical objective $\widehat{\mathcal J}_{\text{SP}}$ is the Monte Carlo estimator of $\mathcal J_{\text{SP}}$, and the following gradient gaps hold:
\[
\left\|\,\nabla_\theta \mathcal J_{\text{LB}}(\theta)-\nabla_\theta \mathcal J(\theta)\,\right\|_2\le\epsilon_{\mathrm{vi}},
\qquad
\left\|\,\nabla_\theta \widehat{\mathcal J}_{\text{SP}}(\theta,\phi)-\nabla_\theta \mathcal J_{\text{SP}}(\theta,\phi)\,\right\|_2\le\epsilon_{\mathrm{emp}}.
\]
\end{assumption}

\begin{theorem}[Surrogate Approximation Bound]
\label{thm:surrogate_consistency}
Under A1--A3,
\begin{equation}
\left\|\,
\nabla_\theta \widehat{\mathcal J}_{\text{SP}}(\theta,\phi)
-\frac{\beta}{4}\nabla_\theta \mathcal J(\theta)
\,\right\|_2
\le
\epsilon_{\mathrm{emp}}
+\delta_{\mathrm{cal}}
+\frac{\beta}{4}\eta
+\beta L_\sigma(M)M\,B_m
+\frac{\beta}{4}\epsilon_{\mathrm{vi}},
\label{eq:surrogate_bound}
\end{equation}
where
\[
B_m:=\mathbb{E}\!\left[\left\|\nabla_\theta m(z,\tilde s)\right\|_2\right],
\qquad
L_\sigma(M):=\sup_{|u|\le M}|\sigma''(u)|.
\]
Consequently, defining the aggregate error
\[
\epsilon_{\mathrm{tot}}
:=
\underbrace{\epsilon_{\mathrm{emp}}}_{\text{empirical opt.}}
+\underbrace{\delta_{\mathrm{cal}}}_{\text{critic-calibration}}
+\underbrace{\tfrac{\beta}{4}\eta+\beta L_\sigma(M)M B_m}_{\text{structural-surrogate}}
+\underbrace{\tfrac{\beta}{4}\epsilon_{\mathrm{vi}}}_{\text{variational-gap}},
\]
we have $\left\|\,\nabla_\theta \widehat{\mathcal J}_{\text{SP}}(\theta,\phi)-\frac{\beta}{4}\nabla_\theta \mathcal J(\theta)\,\right\|_2\le\epsilon_{\mathrm{tot}}$,
which is the formal version of Informal Theorem~1 in the main text.
\end{theorem}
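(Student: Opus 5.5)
The plan is a telescoping triangle inequality through the chain
\[
\nabla_\theta \widehat{\mathcal J}_{\text{SP}}\;\to\;\nabla_\theta \mathcal J_{\text{SP}}\;\to\;\nabla_\theta \mathcal J_{R^*}\;\to\;\tfrac{\beta}{4}\nabla_\theta \mathcal J_{\text{LB}}\;\to\;\tfrac{\beta}{4}\nabla_\theta \mathcal J .
\]
The first gap is at most $\epsilon_{\mathrm{emp}}$ by A3. The second is at most $\delta_{\mathrm{cal}}$ by A2. The last is at most $\tfrac{\beta}{4}\epsilon_{\mathrm{vi}}$, again by A3. This leaves the middle link, namely bounding
\[
\bigl\|\nabla_\theta \mathcal J_{R^*}(\theta)-\tfrac{\beta}{4}\nabla_\theta \mathcal J_{\text{LB}}(\theta)\bigr\|_2
\]
by $\tfrac{\beta}{4}\eta+\beta L_\sigma(M)M B_m$. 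That is the structural-surrogate term and the main work.

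For the middle link I would Taylor-expand the sigmoid at $0$. We have $\sigma(0)=1/2$, $\sigma'(0)=1/4$, and $\sigma''$ bounded on $[-M,M]$ by $L_\sigma(M)$. Write
\[
R^*(z,\tilde s)=\tfrac12+\tfrac{\beta}{4}m(z,\tilde s)+r(z,\tilde s).
\]
Then
\[
\mathcal J_{R^*}(\theta)=\tfrac12+\tfrac{\beta}{4}\,\mathbb{E}[m]+\mathbb{E}[r].
\]
Expanding $m=\log p(s^*\mid z)-\log p(\tilde s\mid z)$ gives
\[
\tfrac{\beta}{4}\nabla_\theta\mathbb{E}[m]=\tfrac{\beta}{4}\nabla_\theta \mathcal J_{\text{LB}}(\theta)-\tfrac{\beta}{4}\nabla_\theta\mathbb{E}[\log p(\tilde s\mid z)].
\]
Here the first piece uses that $\log p(s^*\mid z)$ does not depend on $\tilde s$, so marginalizing over $q_\rho$ recovers exactly $\mathcal J_{\text{LB}}$. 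By A1(b), the second piece has norm at most $\tfrac{\beta}{4}\eta$.

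It remains to control $\nabla_\theta\mathbb{E}[r]$. I would differentiate pathwise:
\[
\nabla_\theta R^*=\beta\,\sigma'(\beta m)\,\nabla_\theta m .
\]
Then compare $\sigma'(\beta m)$ with $\sigma'(0)=1/4$ using the mean value theorem:
\[
|\sigma'(\beta m)-\tfrac14|\le L_\sigma(M)\,|\beta m|\le L_\sigma(M)M ,
\]
where the last step uses A1(a). Hence
\[
\bigl\|\nabla_\theta R^*-\tfrac{\beta}{4}\nabla_\theta m\bigr\|_2\le \beta L_\sigma(M)M\,\|\nabla_\theta m\|_2 .
\]
Taking expectations and applying Jensen to the norm yields $\beta L_\sigma(M)M B_m$, matching the statement.

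The main obstacle is that $z\sim\pi_\theta$ depends on $\theta$. A naive pathwise derivative therefore misses the score-function contribution $\mathbb{E}[R^*\,\nabla_\theta\log\pi_\theta]$. I would handle this by reading $\nabla_\theta m$ and $B_m$ as the total derivative of the integrand under the sampling path, via reparameterization or by absorbing the score term into $\nabla_\theta m$. With that reading, the same mean-value comparison applies to the full gradient. If that fails, a fallback is to compare the score-function terms directly using the uniform bound $|R^*-\tfrac12-\tfrac{\beta}{4}m|\le \tfrac12 L_\sigma(M)M^2$. That would change the constant, so I would adopt the first interpretation to match the stated bound. Summing the four pieces gives $\epsilon_{\mathrm{tot}}$.
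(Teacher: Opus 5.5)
Your proposal is correct and follows the paper's proof essentially step for step. It uses the same telescoping through $\mathcal J_{\text{SP}}$, $\mathcal J_{R^*}$ and $\mathcal J_{\text{LB}}$: A2 and A3 supply $\delta_{\mathrm{cal}}$, $\epsilon_{\mathrm{emp}}$ and $\tfrac{\beta}{4}\epsilon_{\mathrm{vi}}$, A1(b) supplies $\tfrac{\beta}{4}\eta$, and a mean-value comparison of $\sigma'$ against $\sigma'(0)=\tfrac14$ under A1(a) supplies $\beta L_\sigma(M)MB_m$. The paper asserts that mean-value step without addressing the $\theta$-dependence of $z\sim\pi_\theta$ that you flag. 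Your pathwise reading of $\nabla_\theta m$ gives exactly the stated constant, and so does the alternative reading $\nabla_\theta m := m\,\nabla_\theta\log\pi_\theta$, after discarding $\tfrac12$ via $\mathbb E[\nabla_\theta\log\pi_\theta]=0$.
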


\begin{proof}
We bound the total error in three steps via the intermediate objectives $\mathcal J_{R^*}$ and $\mathcal J_{\text{LB}}$.

\textbf{Step 1} ($\mathcal J_{\text{SP}}$ to $\mathcal J_{\text{LB}}$).
By triangle inequality,
\[
\left\|\nabla_\theta \mathcal J_{\text{SP}}-\tfrac{\beta}{4}\nabla_\theta \mathcal J_{\text{LB}}\right\|_2
\le
\underbrace{\left\|\nabla_\theta \mathcal J_{\text{SP}}-\nabla_\theta \mathcal J_{R^*}\right\|_2}_{\le\,\delta_{\mathrm{cal}}\ \text{(A2)}}
+
\left\|\nabla_\theta \mathcal J_{R^*}-\tfrac{\beta}{4}\nabla_\theta \mathcal J_{\text{LB}}\right\|_2.
\]
For the second term, since $R^*=\sigma(\beta m)$ and $\sigma'(0)=\tfrac{1}{4}$, a further triangle inequality gives
\[
\left\|\nabla_\theta \mathcal J_{R^*}-\tfrac{\beta}{4}\nabla_\theta \mathcal J_{\text{LB}}\right\|_2
\le
\left\|\nabla_\theta \mathcal J_{R^*}-\tfrac{\beta}{4}\nabla_\theta\mathbb{E}[m]\right\|_2
+
\tfrac{\beta}{4}\left\|\nabla_\theta\mathbb{E}[m]-\nabla_\theta \mathcal J_{\text{LB}}\right\|_2.
\]
By mean-value theorem with A1(a) ($|\beta m|\le M$):
\begin{equation}
\left\|\nabla_\theta \mathcal J_{R^*}-\tfrac{\beta}{4}\nabla_\theta\mathbb{E}[m]\right\|_2
\le\beta L_\sigma(M)M\,B_m.
\label{eq:proof1}
\end{equation}
Since $m=\log p(s^*\mid z)-\log p(\tilde s\mid z)$, we have $\mathbb{E}[m]=\mathcal J_{\text{LB}}(\theta)-\mathbb{E}[\log p(\tilde s\mid z)]$, so by A1(b):
\begin{equation}
\left\|\nabla_\theta\mathbb{E}[m]-\nabla_\theta \mathcal J_{\text{LB}}\right\|_2\le\eta.
\label{eq:proof2}
\end{equation}
Combining:
$\|\nabla_\theta \mathcal J_{\text{SP}}-\tfrac{\beta}{4}\nabla_\theta \mathcal J_{\text{LB}}\|_2
\le\delta_{\mathrm{cal}}+\beta L_\sigma(M)M\,B_m+\tfrac{\beta}{4}\eta.$

\textbf{Step 2} ($\mathcal J_{\text{LB}}$ to $\mathcal J$ via variational identity).
For each $(x,s^*,\mathcal D_{\text{ref}})$, letting $p_\theta\propto\pi_\theta\,p(s^*\mid z)$:
\[
\mathcal J(\theta)=\mathcal J_{\text{LB}}(\theta)+\mathbb{E}\!\left[\mathrm{KL}\!\left(\pi_\theta\;\|\;p_\theta\right)\right],
\]
so $\mathcal J(\theta)\ge \mathcal J_{\text{LB}}(\theta)$; by A3: $\|\nabla_\theta \mathcal J_{\text{LB}}(\theta)-\nabla_\theta\mathcal J(\theta)\|_2\le\epsilon_{\mathrm{vi}}$.

\textbf{Step 3} (Empirical gap and conclusion).
By A3 and triangle inequality,
\[
\left\|\nabla_\theta\widehat{\mathcal J}_{\text{SP}}(\theta,\phi)-\tfrac{\beta}{4}\nabla_\theta\mathcal J(\theta)\right\|_2
\le
\epsilon_{\mathrm{emp}}
+\left\|\nabla_\theta \mathcal J_{\text{SP}}(\theta,\phi)-\tfrac{\beta}{4}\nabla_\theta \mathcal J_{\text{LB}}(\theta)\right\|_2
+\tfrac{\beta}{4}\epsilon_{\mathrm{vi}}.
\]
Substituting Step 1 yields \eqref{eq:surrogate_bound}.
\end{proof}

\subsection{Details of Informal Theorem 2 (Variance Reduction)}
\label{app:theory:thm2_details}

We analyze planner gradient variance with critique-conditioned training.
For the critique-conditioned policy objective
\[
\mathcal J_{\mathrm{pg}}(\theta):=\mathbb E_{z\sim\pi_\theta(\cdot\mid x,\mathcal D_{\text{ref}},c),\,r\sim p(r\mid z,c,x)}[r],
\]
the REINFORCE identity gives
\[
\nabla_\theta \mathcal J_{\mathrm{pg}}(\theta)=\mathbb E\!\left[H(z,c;x)\,(r-b(x,c))\right],
\]
which motivates the end-to-end estimator below; replacing $r$ by $\bar r(z,c,x)=\mathbb E[r\mid z,c,x]$ gives the two-agent estimator.
Let
\[
z\sim \pi_\theta(\cdot\mid x,\mathcal D_{\text{ref}},c),
\quad
H(z,c;x):=\nabla_\theta \log \pi_\theta(z\mid x,\mathcal D_{\text{ref}},c),
\]
and let $r\sim p(r\mid z,c,x)$ denote reward obtained through black-box execution.

Define end-to-end estimator
\begin{equation}
\hat g_{\text{e2e}}:=H(z,c;x)\,(r-b(x,c)).
\label{eq:ge2e_c}
\end{equation}
Define two-agent conditional-mean estimator
\begin{equation}
\bar r(z,c,x):=\mathbb E[r\mid z,c,x],\qquad
\hat g_{\text{2a}}:=H(z,c;x)\,(\bar r(z,c,x)-b(x,c)).
\label{eq:g2a_c}
\end{equation}

\paragraph{Assumptions for Theorem 2.}
\begin{assumption}[B1 (Finite second moments)]
$\mathbb E[\|H(z,c;x)\|_2^2] < \infty$ and $\mathbb E[r^2\mid z,c,x] < \infty$.
\end{assumption}

\begin{assumption}[B2 (Baseline independence)]
$b(x,c)$ is independent of $r$ conditioned on $(x,c)$.
\end{assumption}

\begin{theorem}[Variance Decomposition]
\label{thm:var_decomp_2agent}
Under B1--B2,
\begin{equation}
\mathrm{Var}(\hat g_{\text{e2e}})
=
\mathrm{Var}(\hat g_{\text{2a}})
+
\Delta_{\text{exec}\mid c},
\label{eq:var_decomp_2a}
\end{equation}
where
\begin{equation}
\Delta_{\text{exec}\mid c}
:=
\mathbb E\!\left[
\|H(z,c;x)\|_2^2\,\mathrm{Var}(r\mid z,c,x)
\right]\ge 0.
\label{eq:delta_exec_c}
\end{equation}
Hence $\mathrm{Var}(\hat g_{\text{2a}})\le \mathrm{Var}(\hat g_{\text{e2e}})$,
with strict inequality whenever $\mathrm{Var}(r\mid z,c,x)>0$ on a set of non-zero measure.
This is the formal version of Informal Theorem~2 in the main text.
\end{theorem}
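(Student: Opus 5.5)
This is the law of total variance, applied conditionally on $(x,c,z)$. Write $A:=H(z,c;x)$ and $\epsilon:=r-\bar r(z,c,x)$. Then
\[
\hat g_{\text{e2e}}=\hat g_{\text{2a}}+A\,\epsilon .
\]
Since the estimators are vectors, I read $\mathrm{Var}(\cdot)$ as the trace of the covariance, $\mathbb E\|\cdot-\mathbb E[\cdot]\|_2^2$; this is the only reading under which $\Delta_{\text{exec}\mid c}$ can be a scalar. Assumption B1 makes every quantity below finite, via Cauchy--Schwarz, whenever the baseline is square-integrable.

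\textbf{Step 1: equal means.} By the tower property, $\mathbb E[\epsilon\mid z,c,x]=0$. The factor $A$ is a function of $(z,c,x)$, so
\[
\mathbb E[A\epsilon]=\mathbb E\bigl[A\,\mathbb E[\epsilon\mid z,c,x]\bigr]=0,
\]
and therefore $\mathbb E[\hat g_{\text{e2e}}]=\mathbb E[\hat g_{\text{2a}}]$. It follows that $\mathrm{Var}(\hat g_{\text{e2e}})-\mathrm{Var}(\hat g_{\text{2a}})=\mathbb E\|\hat g_{\text{e2e}}\|^2-\mathbb E\|\hat g_{\text{2a}}\|^2$.

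\textbf{Step 2: the cross term vanishes.} Expanding $\|\hat g_{\text{2a}}+A\epsilon\|^2$ produces the cross term $2\,\mathbb E[\langle \hat g_{\text{2a}},A\rangle\,\epsilon]$. The factor $\hat g_{\text{2a}}=A(\bar r-b(x,c))$ is measurable with respect to $(z,c,x)$, provided the baseline is too. Assumption B2 is used exactly here: it says $b$ carries no information about $r$ beyond $(x,c)$. If $b$ is allowed extra randomness, I would condition on $(z,c,x,b)$ instead and use B2 to keep $\mathbb E[\epsilon\mid z,c,x,b]=0$. Conditioning on $(z,c,x)$ then gives $\mathbb E[\langle \hat g_{\text{2a}},A\rangle\,\mathbb E[\epsilon\mid z,c,x]]=0$.

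\textbf{Step 3: identify the remainder.} What is left is
\[
\mathbb E\bigl[\|A\|^2\epsilon^2\bigr]=\mathbb E\bigl[\|H\|_2^2\,\mathbb E[\epsilon^2\mid z,c,x]\bigr]=\mathbb E\bigl[\|H\|_2^2\,\mathrm{Var}(r\mid z,c,x)\bigr]=\Delta_{\text{exec}\mid c},
\]
which is nonnegative as the expectation of a nonnegative quantity. This proves the decomposition and the inequality $\mathrm{Var}(\hat g_{\text{2a}})\le\mathrm{Var}(\hat g_{\text{e2e}})$.

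\textbf{Strictness.} Strict inequality requires $\|H\|_2^2\,\mathrm{Var}(r\mid z,c,x)>0$ on a set of positive measure. The hypothesis as stated only puts $\mathrm{Var}(r\mid\cdot)>0$ on such a set. I would add the mild qualification that $H\neq 0$ there as well, or note that the statement implicitly assumes the score does not vanish on that set.

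The main obstacle is this strictness qualification together with the precise role of B2. The rest is routine bookkeeping.
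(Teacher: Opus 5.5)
Your proof is correct and is the paper's argument in substance: the paper applies the law of total variance conditioning on $(z,c)$, identifying $\mathbb E[\hat g_{\text{e2e}}\mid z,c,x]=\hat g_{\text{2a}}$ and $\mathrm{Var}(\hat g_{\text{e2e}}\mid z,c,x)=\|H(z,c;x)\|_2^2\,\mathrm{Var}(r\mid z,c,x)$, with B2 used implicitly to treat $b(x,c)$ as fixed under the conditioning; you prove the same decomposition by hand, using the orthogonality of $H\epsilon$ to every $(z,c,x)$-measurable term. Your strictness caveat is right, since strict inequality also needs $H\neq 0$ on the set where $\mathrm{Var}(r\mid z,c,x)>0$, which the statement omits; however, your aside that B1 plus a square-integrable baseline makes everything finite via Cauchy--Schwarz overreaches, because that argument would need fourth moments, though the bounded rewards used in this paper suffice.
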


\begin{proof}
Apply law of total variance conditioning on $(z,c)$:
\[
\mathrm{Var}(Y)=\mathbb E[\mathrm{Var}(Y\mid z,c)]
+\mathrm{Var}(\mathbb E[Y\mid z,c]).
\]
Take $Y=\hat g_{\text{e2e}}$.
Conditioned on $(z,c)$, only $r$ is random:
\[
\mathrm{Var}(\hat g_{\text{e2e}}\mid z,c,x)
=
\|H(z,c;x)\|_2^2\,\mathrm{Var}(r\mid z,c,x).
\]
Taking expectation gives $\Delta_{\text{exec}\mid c}$.
Also,
\[
\mathbb E[\hat g_{\text{e2e}}\mid z,c,x]
=
H(z,c;x)\,(\bar r(z,c,x)-b(x,c))
=
\hat g_{\text{2a}}.
\]
Thus the second term is $\mathrm{Var}(\hat g_{\text{2a}})$, proving \eqref{eq:var_decomp_2a}.
\end{proof}

\paragraph{Interpretation.}
End-to-end optimization carries executor-induced randomness directly into policy gradient. The two-agent decomposition conditions on critique signal and replaces stochastic return with a lower-noise conditional target, removing the additive variance term in \eqref{eq:delta_exec_c}.

\section{More Technical Details}

\subsection{Details about reference-based PSP Data}
\label{app:train-data}

We present more details about data curation.

\subsubsection{Instruction-Target-Reference Triplets.}
We detail the construction of the data triplets. Let's use the training data $\mathcal{D}_{\text{tr}}$ as an example. 
By definition, each data point in
\begin{align*}
\mathcal{D}_{\text{tr}} = \{(x^{(1)}, (s^{*})^{(1)}, \mathcal{D}_{\text{ref}}^{(1)}), \ldots, (x^{(J)}, (s^{*})^{(J)}, \mathcal{D}_{\text{ref}}^{(J)})\}
\end{align*}
is a triplet consisting of three components: (i) a high-level instruction $x$, (ii) a gold slide page $s^*$ to be generated, and (iii) a reference set $\mathcal{D}_{\text{ref}}$. 
In practice, one may only have access to raw slide decks from users. Nevertheless, these triplets can be readily constructed from such decks as follows:

\begin{itemize}
\item 
\textbf{Gold Slide Page.} 
Each individual slide page $s$ within the training split of deck is treated as a target page $s^*$ to be generated. 

\item
\textbf{High-level Instruction.} 
We employ a capable VLM (e.g., GPT-4o) to provide a \emph{concise} summary of the slide page $s^*$, e.g., naming the text boxes and visual elements. 
Note that this summary captures only the factual content without describing visual styles such as coloring, font size, or layout organization, thereby serving the role of the high-level instruction $x$. 
Notably, these omitted visual styles represent the latent design intent that must be inferred from the reference set for successful PSP.

\item
\textbf{Reference Set.} 
We randomly pair each $s^*$ with a few other pages from the same deck to serve as the $\mathcal{D}_{\text{ref}}$, leveraging the inherent visual coherence within a single deck. In our experiments, we set the size of the reference set to 1. 
We use a single reference slide to create a challenging personalization setting while keeping the retrieval protocol consistent across methods.

\end{itemize}

\noindent
This procedure transforms raw slide decks into the structured training corpus $\mathcal{D}_{\text{tr}}$. $\D_\text{te}$ is constructed similarly.

\subsubsection{Train, Test, and OOD Data.}
As presented in Sec.~\ref{sec:exp}, we first perform a slide-level held-out split that preserves the deck's chronological order. The last 20\% of slides are reserved for testing, and the remaining slides are used for training (or for retrieval in training-free baselines).

Next, we construct target--reference pairs within each split to ensure that test slides are never leaked during training. To determine the feasibility of each pair, we follow \cite{tang2025slidecoder} to compute the visual complexity scores for both target and reference slides, filtering out pairs where the target--reference complexity score gap is excessive. 
When constructing these pairs, we consider any combination of two slides within a split to be viable, regardless of their original chronological order. Consequently, for a filtered split containing $K$ pages, the resulting dataset size is $K(K-1)$.

To better reflect realistic PSP requirements, we construct OOD data using a more challenging protocol. 
These OOD decks are never seen during training; starting from the second page, each slide is paired with its \emph{immediately preceding} page. 
This setup mimics the sequential nature of real-world slide creation. No complexity filtering is applied to the OOD set. Therefore, for a raw OOD deck of $K$ pages, the final evaluation set is of size $K-1$.

\subsection{Training Details of the Critic}
\label{app:critic}
We provide additional training details for the critic agent.

\subsubsection{Critic Prompt.}
We prompt the critic to act as a presentation design expert responsible for evaluating a generated slide against the provided user instruction $x$ and reference slide(s) $\mathcal{D}_{\text{ref}}$. 
The critique generation is structured into three stages.
We provide the prompt template in the end of the supplementary material. 
\begin{itemize}
\item 
\textbf{Analysis.} The critic first writes a brief high-level analysis summarizing the major design inconsistencies. 

\item
\textbf{Assessment.}
It then assesses the generated slide along eight predefined structural aspects subject to random perturbations:
\emph{graphic color, graphic position, graphic size, image position, image size, text color, text position}, and \emph{text size}. See Supp.~\ref{app:perturb} for further details on the perturbation mechanism.

\item 
\textbf{Feedback.}
Finally, the critic outputs a single structured block containing aspect-wise feedback for all eight aspects in a fixed order. Each entry includes the current issue and a target correction. To ensure the feedback is directly usable for downstream refinement, the prompt requires concise and actionable suggestions, with explicit numeric targets in normalized coordinates whenever position or size adjustments are necessary.
\end{itemize}

\subsubsection{Format Reward.}
We employ a rule-based format reward $R_{\text{format},c}(c)$ to encourage the critic to follow the required multi-stage output schema. The verification checks whether each stage is properly produced, including the \emph{overall} structure as well as the specific requirements for the \emph{analysis}, \emph{assessment}, and final aspect-wise \emph{feedback} sections.
The verification checks: (i) compliance with the required format by identifying keyword tags and ensuring the content is non-empty; and (ii) whether the names of aspect-level entries are valid and non-duplicated. 

Mathematically, the format reward is a weighted sum of these components:
\begin{align*}
R_{\text{format, c}}(c)
&=
0.05\,R_{\text{structure}}(c)
+
0.05\,R_{\text{analysis}}(c) \\
&\quad +
0.05\,R_{\text{assessment}}(c)
+
0.85\,R_{\text{feedback}}(c).
\end{align*}
We set $R_{\text{feedback}}(c)$ as the dominant term, reflecting that the structured aspect-wise feedback is the most critical part of the critic's output.

\subsubsection{Accuracy Reward.}
The accuracy reward, as introduced in Sec.~2.2, verifies whether the extracted critique is consistent with the ground-truth discrepancy list $\mathcal{A}_{\text{diff}}$. 
Specifically, for each discrepancy item in $\mathcal{A}_{\text{diff}}$, we check whether the critic correctly diagnoses both the issue and the corresponding correction. This verification is implemented via a rule-based keyword matching procedure between the \emph{extracted} critique and the perturbation annotations. We compute the final reward by aggregating the matched issue--correction keywords over all discrepancy items:
\begin{equation}
R_{\text{vfy}} (c, \mathcal{A}_{\text{diff}})
=
\frac{\text{\# matched issue--correction keywords in } (c, \mathcal{A}_{\text{diff}})}{\text{\# all ground-truth issue--correction keywords in }\mathcal{A}_{\text{diff}}}.
\end{equation}
Essentially, the accuracy reward measures the precision with which the critique covers the issue--correction pairs specified by $\mathcal{A}_{\text{diff}}$.

This reward, combined with the format reward, constitutes the final reward for critic training defined in Eq.~\eqref{eq:critic_obj}.

\subsection{Training Details of the Planner}
\label{app:plnr}
We provide additional training details for the planner agent.

\subsubsection{Planner Prompt.}
We prompt the planner to act as an expert slide design planner responsible for producing a detailed and executable design plan conditioned on the provided user instruction $x$ and reference slide(s) $\mathcal{D}_{\text{ref}}$. 
As detailed in Sec.~2.2, the planner aims to perform two complementary subtasks: 
(i) to generate a high-quality initial plan based on the user instruction and reference slides; and 
(ii) to revise a suboptimal plan if a critique is provided. 
We use separate prompt templates for these two scenarios, both of which are structured into three stages. 
We provide the prompt template at the end of the supplementary material. 

\paragraph{Initial Plan Generation.}
\begin{itemize}
\item
\textbf{Analysis.}
The planner first infers transferable design principles from the reference slide(s), including the core visual logic, reusable stylistic elements, and context-specific factors that should be adapted to the current request.

\item
\textbf{Strategy.}
It then formulates an adaptation strategy that maps these principles into concrete planning decisions, including the overall layout structure, the roles of major visual elements, and the aspect-level choices governing color, position, size, and typographic hierarchy.

\item
\textbf{Plan.}
Finally, the planner outputs a complete design plan that specifies the slide composition in a sequential and directly executable manner, covering the background, major containers, textual components, visual elements, and overall spacing/balance. To ensure executability, the prompt requires concrete normalized spatial specifications and prohibits ambiguous instructions.
\end{itemize}

\paragraph{Critique-Guided Plan Revision.}
\begin{itemize}
\item
\textbf{Analysis.}
The planner first infers transferable design principles from the reference slide(s), mimicking a chain-of-thought process.

\item
\textbf{Strategy.}
It then revises the suboptimal plan according to the provided critique $c$. In particular, the planner first normalizes the previous plan into an explicit structured representation, and then applies the required edits while preserving aspects that are already satisfactory.

\item
\textbf{Plan.}
Finally, the planner outputs a new complete design plan that retains the overall structure of the suboptimal plan whenever possible, while incorporating the requested corrections in a directly executable form.
\end{itemize}

Both initial and revised plans are evaluated from \emph{format} and \emph{accuracy} perspectives in a unified way, as detailed below. 

\subsubsection{Format Reward.}
We employ a rule-based format reward $R_{\text{format},p}(z)$ to encourage the planner to follow the required multi-stage output schema. The verification checks whether each stage is properly produced, including the \emph{overall} structure as well as the specific requirements for the \emph{analysis}, \emph{strategy}, and final \emph{plan} sections. The verification checks: (i) compliance with the required format by identifying keyword tags and ensuring the content is non-empty; and (ii) whether the final plan preserves the required asset content, ensuring no critical user-provided material is omitted.

Mathematically, the format reward is a weighted sum of these components:
\begin{align*}
R_{\text{format},p}(z)
&=
0.05\,R_{\text{structure}}(z)
+
0.05\,R_{\text{analysis}}(z) \\
&\quad +
0.10\,R_{\text{strategy}}(z)
+
0.80\,R_{\text{plan}}(z).
\end{align*}
We set $R_{\text{plan}}(z)$ as the dominant term, reflecting that the final executable design plan is the most critical part of the planner's output.

\subsubsection{Accuracy Reward.}
The accuracy reward, as introduced in Sec.~2.2, verifies whether the planner's output is consistent with the target slide under the plan--visual matching (PVM) objective. Specifically, we extract the natural-language design plan from the final \emph{plan} stage and evaluate whether it matches the gold slide $s^*$ more faithfully than a randomly perturbed slide $\tilde{s}$. To mitigate positional bias, the comparison is conducted under both visual presentation orders, and the final reward is computed by averaging the two binary outcomes:
\begin{equation}
R_{\text{pvm}}(z, s^*, \tilde{s})
=
\frac{1}{2}
\left(
\mathbb{I}[\hat{o}_{\rightarrow}=1]
+
\mathbb{I}[\hat{o}_{\leftarrow}=1]
\right).
\end{equation}
Essentially, the accuracy reward measures whether the final plan describes the gold slide more faithfully than the perturbed alternative, based on the judgment of a capable VLM judge (e.g., o4-mini).

This reward, combined with the format reward, constitutes the final reward for planner training defined as
{
\small
\begin{align*}
\mathcal{J}_{\text{plnr}}(\theta)
&= \mathbb{E}_{\substack{s^* \sim \mathcal{D}_{\text{tr}}\\ \tilde{s} \sim q(\cdot \mid s^*)}}
\Big[
\mathbb{E}_{c^* \sim \text{Oracle}(c)}
\big[
\mathbb{E}_{z \sim \pi_\theta(\cdot \mid x, \mathcal{D}_{\text{ref}}, c^*)}
[
R_{\text{pvm}}(z, s^*, \tilde{s}) \times R_\text{format,p}(z)
]
\big]
\Big].
\end{align*}
}

\subsubsection{Synthesis of Critique-Guided Plan Revision Data.}
As detailed in Sec.~\ref{sec:method:spire}, critique-guided revision training relies on high-quality revision triplets $(\tilde{z}, \tilde{s}, c^*)$. Here, $\tilde{s}$ is a perturbed slide, $\tilde{z}$ is a suboptimal plan aligned with $\tilde{s}$, and $c^*$ is an oracle critique that specifies how the suboptimal plan should be revised toward the gold slide $s^*$. The construction process is summarized as follows:

\begin{itemize}
\item
\textbf{Perturbed Slide.}
The perturbed slide $\tilde{s}$ is obtained by applying recorded structural perturbations to the gold slide $s^*$, as described in Sec.~2.2. Since these perturbations are known and explicitly recorded, $\tilde{s}$ serves as a controllable visual target corresponding to the suboptimal state.

\item
\textbf{Oracle Critique.}
To construct the oracle critique $c^*$, we provide the perturbation action list to a capable LLM and ask it to translate the known discrepancies into the required critique format. This converts the recorded structural corruptions into natural-language feedback that is directly consumable by the planner. We further apply rejection filtering using the critique reward introduced in Supp~\ref{app:critic}, retaining only those pass this verification step.

\item
\textbf{Suboptimal Plan.}
To construct the suboptimal plan $\tilde{z}$, we provide the perturbed slide $\tilde{s}$ to a capable LLM and ask it to infer a complete design plan that accurately \emph{reflects the current degraded slide} rather than the gold slide. We then apply rejection filtering based on the planner reward and retain only those candidate plans where the plan--visual matching reward prefers $\tilde{s}$ over $s^*$, thereby enforcing consistency between the inferred plan and the perturbed visual input.
\end{itemize}

Overall, this pipeline ensures coherent revision supervision: $\tilde{s}$ provides the observable suboptimal slide, $\tilde{z}$ captures the corresponding suboptimal design intent, and $c^*$ specifies the corrections needed to recover the intended gold design.

\subsection{Implementation and Evaluation Details of Experiments}
\label{app:experiment}

\subsubsection{Implementation Details.}
We train {\name} with \texttt{verl}~\cite{sheng2024hybridflow}; the detailed training hyper-parameters are provided in Tab~\ref{tab:hparams}. 

\begin{table*}[t!]
\caption{
Hyper-parameters used for critic and planner training.
}
\label{tab:hparams}

\centering
\resizebox{0.85\linewidth}{!}{%
\renewcommand{\tabcolsep}{4pt}
\begin{tabular}{
r
cc
}
\toprule[0.4ex]
\textbf{Parameter} & \textbf{Critic} & \textbf{Planner} \\
\midrule[0.2ex]
Policy model & Qwen2.5-VL-7B-Instruct & Qwen2.5-VL-7B-Instruct \\
RL algorithm & DAPO & DAPO \\
Number of epochs & 1 & 1 \\
Batch size & 4 & 4 \\
Learning rate & $1 \times 10^{-6}$ & $1 \times 10^{-6}$ \\
Maximum prompt length & 6000 & 6000 \\
Maximum response length & 2048 & 2048 \\
Rollout numbers & 4 & 4 \\
Lower clipping ratio & 0.2 & 0.2 \\
Upper clipping ratio & 0.28 & 0.28 \\
Clipping coefficient & 10.0 & 10.0 \\
Entropy coefficient & 0 & 0 \\
KL regularization coefficient & 0 & 0 \\
\bottomrule[0.4ex]
\end{tabular}
}
\end{table*}

\subsubsection{Evaluation Details (Visual Similarity).}
For the visual similarity evaluation, we treat each slide as a rendered full-page image. 
We report two full-reference image similarity metrics, following \cite{tang2025slidecoder}.
The scores are normalized to $[0,1]$.
\begin{itemize}
\item 
\textbf{Structural Similarity Index Measure (SSIM).} SSIM measures the low-level structural fidelity between the generated slide and the gold slide.

\item 
\textbf{CLIP Image Similarity.} CLIP compares the global visual representations of two images in a pre-trained vision-language embedding space. We use AltCLIP~\cite{chen2023altclip}. 
\end{itemize}

\subsubsection{Evaluation Details (VLM-as-a-Judge).}
We also report a reference-free VLM-as-a-judge evaluation protocol. 
In this setting, the VLM judge observes the user instruction $x$ and the generated slide to assess the generation quality across the following four aspects. Each aspect is scored on a 0--10 scale and then normalized to $[0,1]$ for aggregation.
\begin{itemize}
\item
\textbf{Faithfulness.}
Whether the textual content requested in the user instruction is fully preserved and readable, without missing text, content truncation, or severe element overlap.

\item
\textbf{Color.}
The harmony of the color palette, and the contrast between foreground and background elements to ensure readability.

\item
\textbf{Layout.}
The overall composition of the slide, including alignment, spacing, margins, and the logical organization of major visual elements.

\item
\textbf{Overall Aesthetics.}
The overall professionalism and visual appeal of the slide as a presentation-grade page.
\end{itemize}

We provide the full VLM-as-a-judge prompt at the end of the supplementary material.

\section{Additional Experiment Results}
\label{app:results}

We present more experiment results that are not included in the main body due to page limits.

\subsubsection{Reference-based Pairwise VLM-Judge Evaluation.}
\label{app:pairwise-result}
To complement the reference-free evaluation, we conduct a reference-based pairwise comparison where the VLM judge is provided with the user instruction $x$, the gold slide $s^*$, and two generated slides from competing methods. This protocol directly measures relative generation quality against the intended target, making it more sensitive to subtle differences in design execution.

We evaluate each pair along four aspects: (i) \emph{faithfulness} (text preservation and readability); (ii) \emph{color}; (iii) \emph{layout}; and (iv) \emph{overall aesthetics}. 
The definitions of these aspects are the same as in the reference-free setting presented above. 
To mitigate positional bias, we evaluate each pair twice by swapping the presentation order and aggregate the results.

Tab~\ref{tab:vlm_pairwise_winrate} reports the \emph{win/tie rate} of {\name}, representing the fraction of cases where {\name} is judged \emph{at least as good as the baseline}. 
As shown in Tab~\ref{tab:vlm_pairwise_winrate}, {\name} consistently outperforms all baselines on both test and OOD pages. The advantages are particularly pronounced in Layout and Faithfulness, where {\name} achieves \emph{win/tie rates} exceeding 90\% against most baselines. Even against the strong GPT-based PSP (o4-mini) baseline, {\name} maintains a significant lead (avg. 0.847/0.851), confirming that our RL-based training produces slides that are consistently preferred in head-to-head comparisons.

\begin{table}[t!]
\caption{
Reference-based Pairwise VLM Evaluation (Win/Tie Rate of {\name}).
}
\label{tab:vlm_pairwise_winrate}
\centering
\resizebox{0.7\linewidth}{!}{%
\renewcommand{\tabcolsep}{4pt}
\begin{tabular}{
r
cccc
>{\columncolor{gray!10}}c
}
\toprule[0.4ex]
\multirow{2}{*}{\bf Versus} & \multicolumn{5}{c}{\bf VLM-Judge Preference (Win/Tie Rate of {\name})} \\
\cmidrule[0.2ex](lr){2-6}
& Faith $\uparrow$ & Color $\uparrow$ & Layout $\uparrow$ & Aest $\uparrow$ & \bf AVG \\
\midrule[0.2ex]
\multicolumn{6}{c}{\textbf{Test Pages}} \\
\midrule[0.2ex]
AutoPresent~\cite{ge2025autopresent} & 0.812 & 0.660 & 0.757 & 0.727 & 0.739 \\
PPTAgent~\cite{zheng2025pptagent} & 0.981 & 0.921 & 0.972 & 0.966 & 0.960 \\
SD 3.5~\cite{esser2024scaling} & 0.932 & 0.881 & 0.939 & 0.902 & 0.914 \\
PSP (o4-mini) & 0.899 & 0.832 & 0.801 & 0.856 & 0.847 \\
PSP (Base) & 0.995 & 0.859 & 0.961 & 0.951 & 0.942 \\
\midrule[0.2ex]

\multicolumn{6}{c}{\textbf{OOD Pages}} \\
\midrule[0.2ex]
AutoPresent~\cite{ge2025autopresent} & 0.896 & 0.761 & 0.791 & 0.763 & 0.803 \\
PPTAgent~\cite{zheng2025pptagent} & 0.983 & 0.931 & 0.934 & 0.925 & 0.943 \\
SD 3.5~\cite{esser2024scaling} & 0.941 & 0.899 & 0.898 & 0.854 & 0.898 \\
PSP (o4-mini) & 0.911 & 0.832 & 0.851 & 0.809 & 0.851 \\
PSP (Base) & 0.951 & 0.903 & 0.911 & 0.887 & 0.913 \\
\bottomrule[0.4ex]
\end{tabular}
}
\end{table}

\subsection{More Qualitative Results}
\label{supp:qualitative}

Fig~\ref{fig:main_more} presents additional visual comparisons between {\name} and baseline methods across both in-distribution test slides and OOD scenarios. 
Consistent with the observations in Sec.~\ref{sec:exp:qual}, {\name} demonstrates superior faithfulness to the user instruction while maintaining high design consistency with the reference slides.

\ExplSyntaxOn
\cs_set:Npn \loadimage #1#2#3#4 {
    \seq_clear:N \l_tmpa_seq
    \int_step_inline:nnn {#3} {#3 + #4 - 1} {
        \file_if_exist:nTF {#1/##1.pdf} 
        {
            \seq_put_right:Nx \l_tmpa_seq {
                \exp_not:N \adjustbox {valign=m} {
                    \exp_not:N \parbox [c] [\exp_not:N \figheight] [c] {\exp_not:N \figwidth} {
                        \exp_not:N \centering
                        \exp_not:N \includegraphics [max~width=\exp_not:N \figwidth, max~height=\exp_not:N \figheight, keepaspectratio, \exp_not:n {#2}] {#1/##1.pdf}
                    }
                }
            }
        }
        {
            \seq_put_right:Nx \l_tmpa_seq {
                \exp_not:N \adjustbox {valign=m} {
                    \exp_not:N \parbox [c] [\exp_not:N \figheight] [c] {\exp_not:N \figwidth} {
                        \exp_not:N \centering
                        \exp_not:N \includegraphics [max~width=\exp_not:N \figwidth, max~height=\exp_not:N \figheight, keepaspectratio, \exp_not:n {#2}] {example-image-a}
                    }
                }
            }
        }
    }
    \seq_use:Nn \l_tmpa_seq {&}
}
\ExplSyntaxOff

\begin{figure*}[htb!] 
\centering
\renewcommand{\tabcolsep}{1.3pt}
\def\figwidth{0.11\textwidth} 
\def\figheight{1.2cm}

\newcommand{\rowgroup}[1]{
    \adjustbox{rotate=90}{{#1}}
}

\begin{tabular}{c c ccccc cc} 
\toprule[0.4ex]

& \scriptsize Ref 
& \scriptsize AutoPre. 
& \scriptsize PPTAgent 
& \scriptsize SD 3.5 
& {\scriptsize \begin{tabular}[c]{@{}c@{}}PSP\\(o4-mini)\end{tabular}} 
& {\scriptsize \begin{tabular}[c]{@{}c@{}}PSP\\(Base)\end{tabular}} 
& \scriptsize Ours 
& \scriptsize Gold \\
\noalign{\vskip 0.5ex}
\midrule[0.2ex]

\multirow{2}{*}[-0.85cm]{\rowgroup{Test}} 
& \loadimage{figures/8_s_137_8}{width=\figwidth, height=\figheight, keepaspectratio}{0}{8} \\ 
& \loadimage{figures/29_s_129_6}{width=\figwidth, height=\figheight, keepaspectratio}{0}{8} \\ 
& \loadimage{figures/47_s_146_9}{width=\figwidth, height=\figheight, keepaspectratio}{0}{8} \\

\noalign{\vskip 0.2ex}\cdashline{2-9}\noalign{\vskip 0.2ex}
\multirow{2}{*}[-0.45cm]{\rowgroup{OOD}} 
& \loadimage{figures/entrepreneur_3}{width=\figwidth, height=\figheight, keepaspectratio}{0}{8} \\
& \loadimage{figures/technology_9}{width=\figwidth, height=\figheight, keepaspectratio}{0}{8} \\ 

\bottomrule[0.4ex]
\end{tabular}
\caption{
Additional qualitative comparison of slide generation results.
}
\label{fig:main_more}
\end{figure*}

\section{More Details about Perturbation}
\label{app:perturb}

We construct perturbations using a structured taxonomy defined along two axes:
\textbf{shape role} and \textbf{visual attribute}. Each slide element is
first assigned one of three shape roles, namely \textit{text}, \textit{graphic},
or \textit{image}, according to its functional role in the slide. Conditioned on
this role assignment, we define a fixed set of perturbable attributes, resulting
in \textbf{eight perturbation categories}.

\subsection{Perturbation Categories}

\paragraph{Text-related categories.}
For text elements, we define three perturbation categories:
\begin{itemize}
    \item \textbf{Text Size} (\texttt{text\_size}): perturbs the font size of a
    text portion.
    \item \textbf{Text Position} (\texttt{text\_position}): perturbs text layout
    by replacing the paragraph alignment with a different valid alignment.
    \item \textbf{Text Color} (\texttt{text\_color}): perturbs the font color of a
    text element.
\end{itemize}

\paragraph{Graphic-related categories.}
For non-image graphic shapes, we define three perturbation categories:
\begin{itemize}
    \item \textbf{Graphic Size} (\texttt{graphic\_size}): perturbs one size
    attribute of a graphic element, namely width or height.
    \item \textbf{Graphic Position} (\texttt{graphic\_position}): perturbs one
    spatial coordinate of a graphic element, namely its horizontal coordinate
    $x$ or vertical coordinate $y$, where $x$ and $y$ denote the horizontal and
    vertical positions of the element on the slide canvas, respectively.
    \item \textbf{Graphic Color} (\texttt{graphic\_color}): perturbs the fill color
    of a graphic element.
\end{itemize}

\paragraph{Image-related categories.}
For image elements, we define two perturbation categories:
\begin{itemize}
    \item \textbf{Image Size} (\texttt{image\_size}): perturbs the width or
    height of an image element.
    \item \textbf{Image Position} (\texttt{image\_position}): perturbs the
    horizontal coordinate $x$ or vertical coordinate $y$ of an image element.
\end{itemize}

\subsection{Perturbation Magnitude}

For numerical attributes, we control perturbation strength with a discrete
severity level. Let $a \in \mathbb{R}$ denote the original value of a numerical
attribute and let $a'$ denote the perturbed value. We sample a relative
perturbation ratio $\delta$ as
\begin{equation}
    \delta = s \cdot u,
    \qquad
    u \sim \mathcal{U}(\delta_{\min}, \delta_{\max}),
    \qquad
    s \sim \mathrm{Unif}\{-1, +1\},
\end{equation}
where $\mathcal{U}(\delta_{\min}, \delta_{\max})$ denotes the uniform
distribution over the interval $[\delta_{\min}, \delta_{\max}]$,
$\delta_{\min}$ and $\delta_{\max}$ are the minimum and maximum perturbation
magnitudes associated with the selected category and severity level,
$u$ is the sampled perturbation magnitude, and $s$ determines the perturbation
direction. The perturbed value is then computed as
\begin{equation}
    a' = a \left(1 + \frac{\delta}{100}\right),
\end{equation}
where $\delta$ is expressed in percentage. When required, $a'$ is clipped to the
valid range of the corresponding attribute, e.g., $[0,255]$ for RGB channels. For categorical attributes, the perturbed value is sampled uniformly from the
set of valid alternatives excluding the original value.

\subsection{Sampling Strategy}
Given a slide, we first enumerate all perturbable items and group them by the
eight categories above. Each category is then activated independently according
to a Bernoulli random variable with parameter $p$, where $p$ denotes the
probability that a category is selected for perturbation. In our experiments, we
set $p = 0.5$. Categories with no valid candidates are skipped automatically.

For each activated category, the pipeline samples one valid candidate from the
corresponding pool and generates one perturbation instruction. To avoid
conflicting edits, each physical shape is constrained to receive at most one
primary perturbation. After primary perturbations are generated, the pipeline
may further sample one additional extra perturbation from the remaining
candidate pool, typically using a lower severity level. This design increases
perturbation diversity while preserving semantic and geometric consistency at
the slide level.

The resulting perturbation process is role-aware, attribute-specific, and
severity-controlled. It produces diverse but structured modifications that
approximate realistic slide variations, while preventing incompatible edits to
the same element. Figure~\ref{fig:perturb_examples} provides qualitative examples of our
perturbation pipeline. The first column contains the original slides and the second
column contains their perturbed counterparts, with each column corresponding to one
slide-level example. 

\begin{figure*}[t]
    \centering
    \setlength{\tabcolsep}{4pt}
    \renewcommand{\arraystretch}{0}
    \begin{tabular}{cc}
        Original & Perturbed \\[4pt]
        \includegraphics[width=0.47\textwidth]{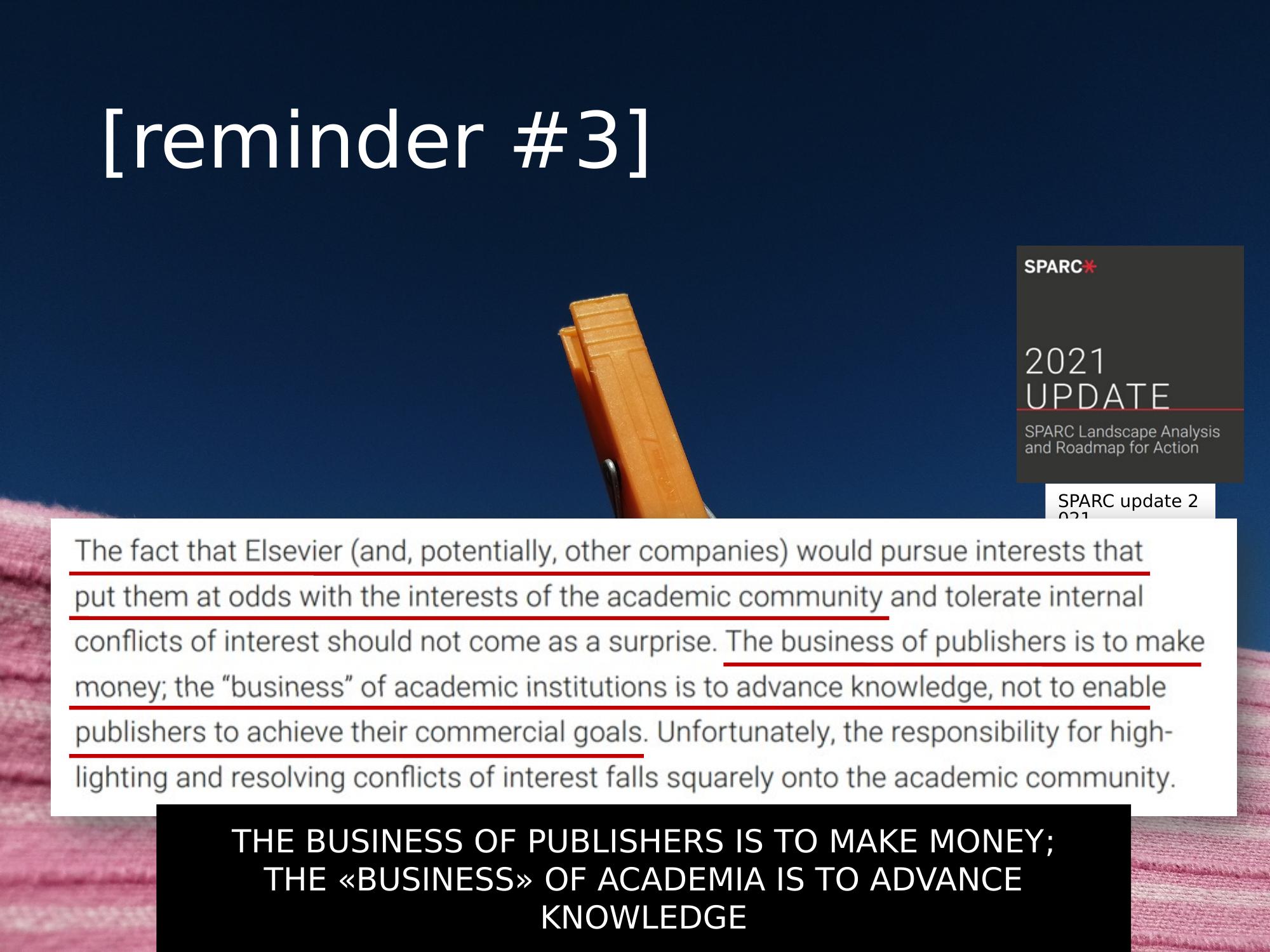} &
        \includegraphics[width=0.47\textwidth]{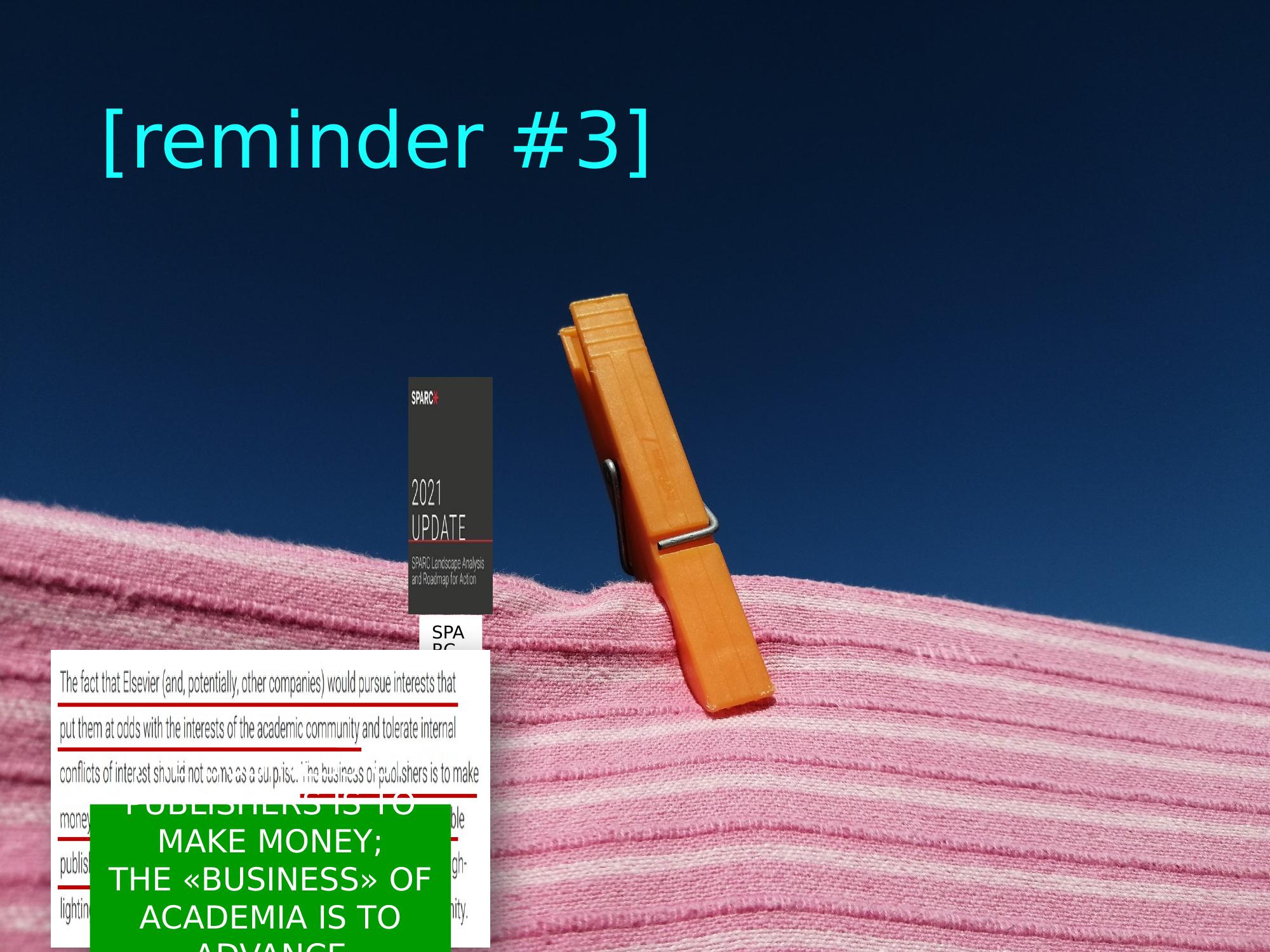} \\[20pt]
        \includegraphics[width=0.47\textwidth]{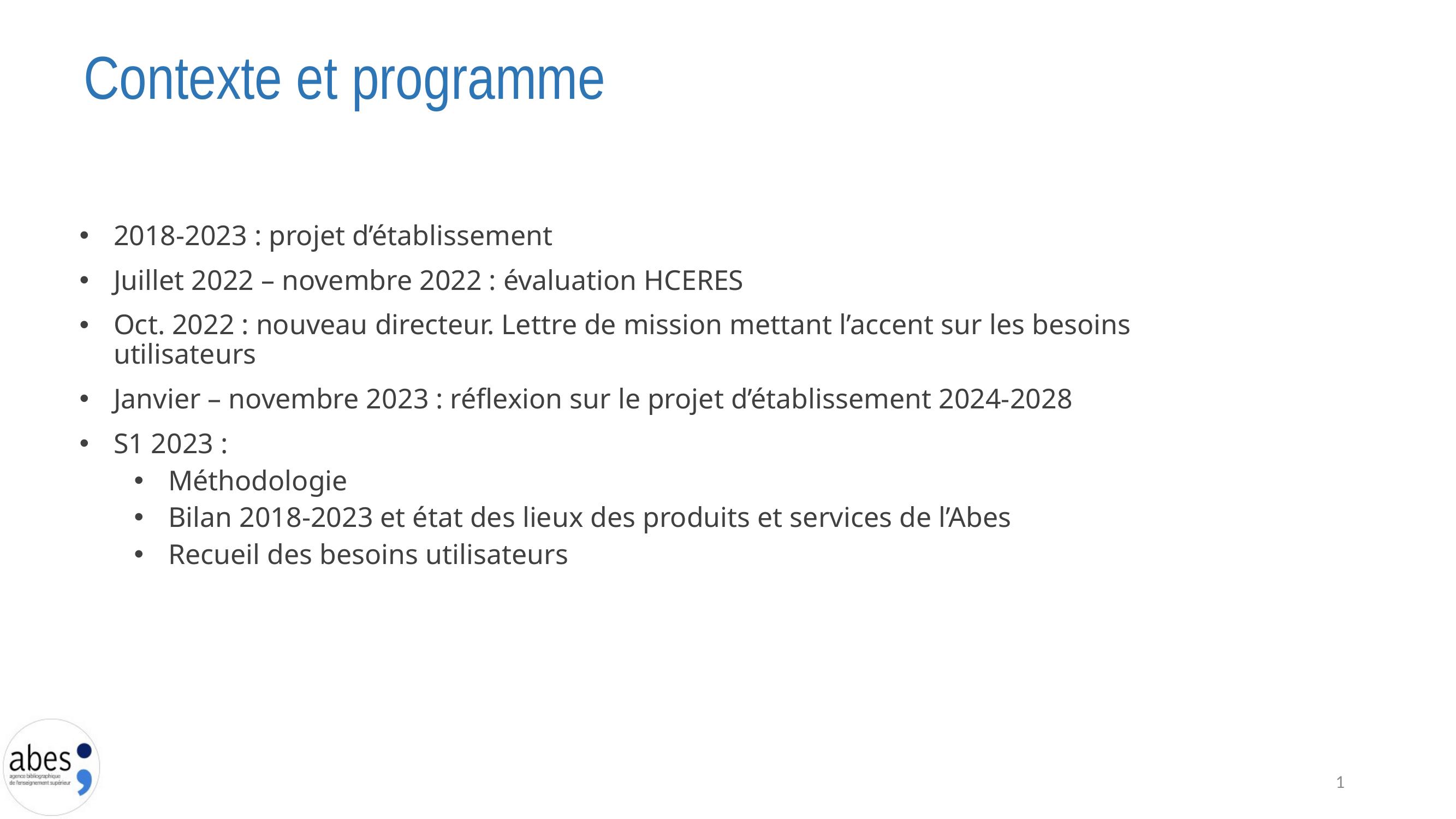} &
        \includegraphics[width=0.47\textwidth]{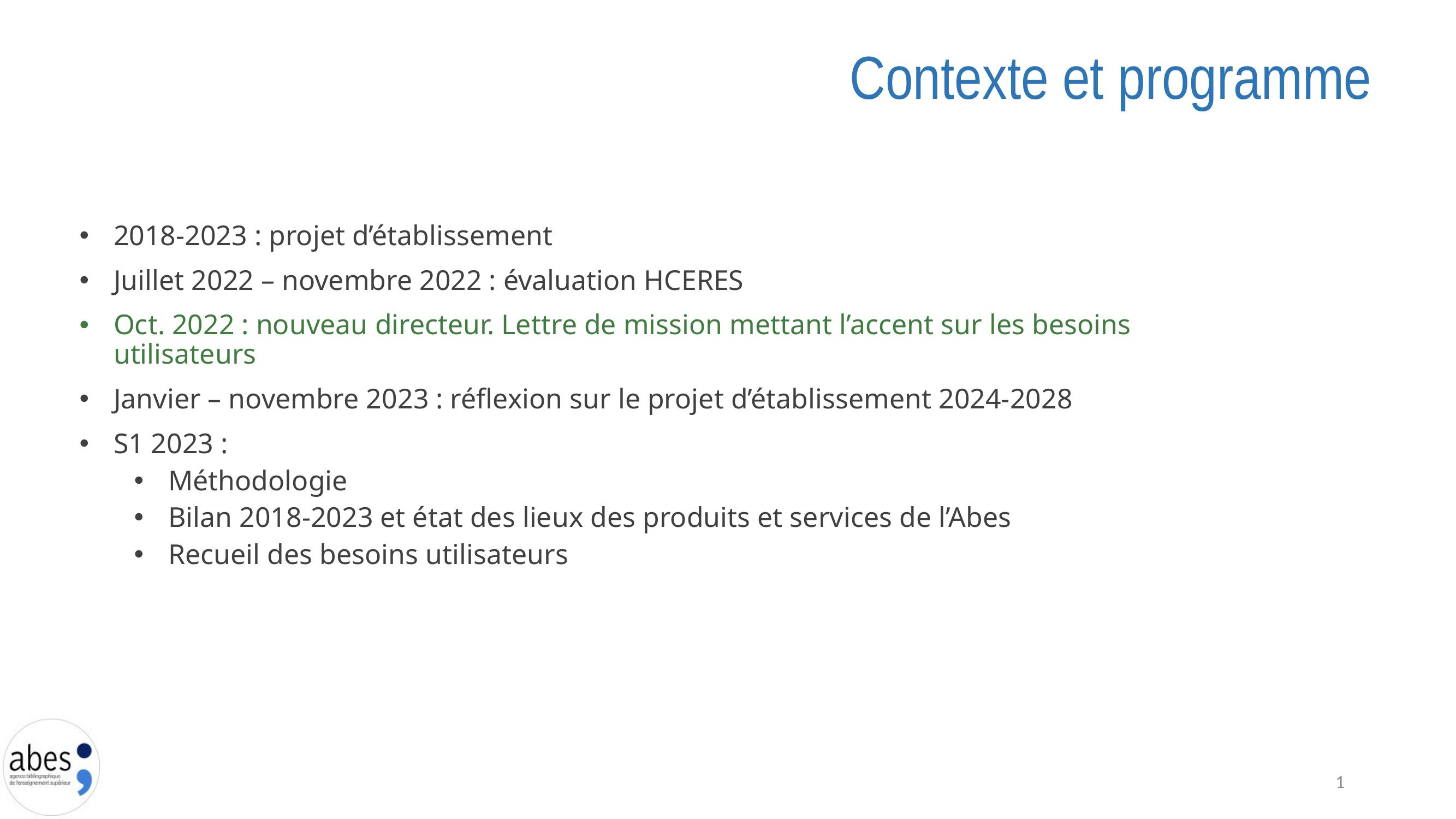} \\[20pt]
        \includegraphics[width=0.47\textwidth]{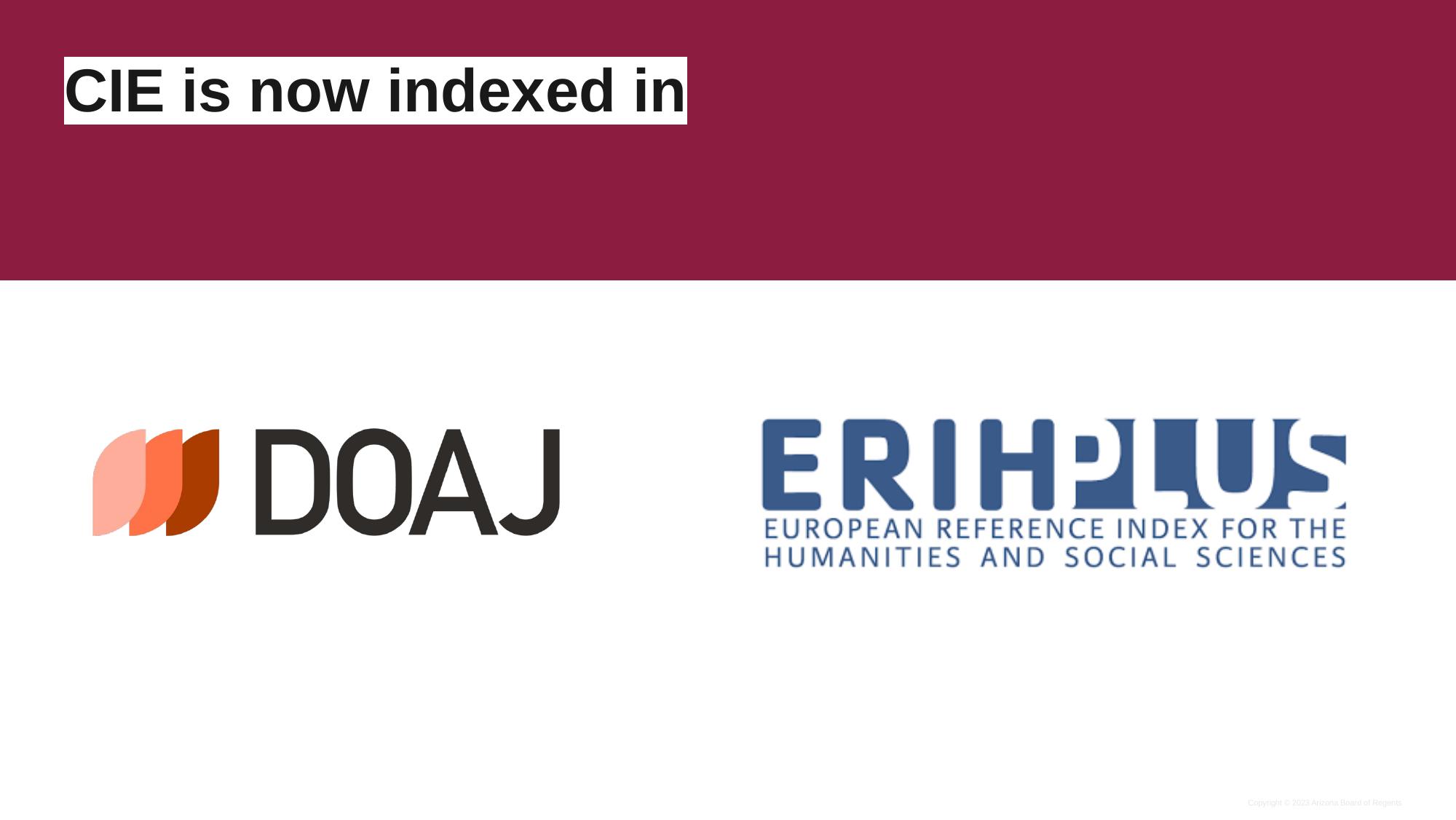} &
        \includegraphics[width=0.47\textwidth]{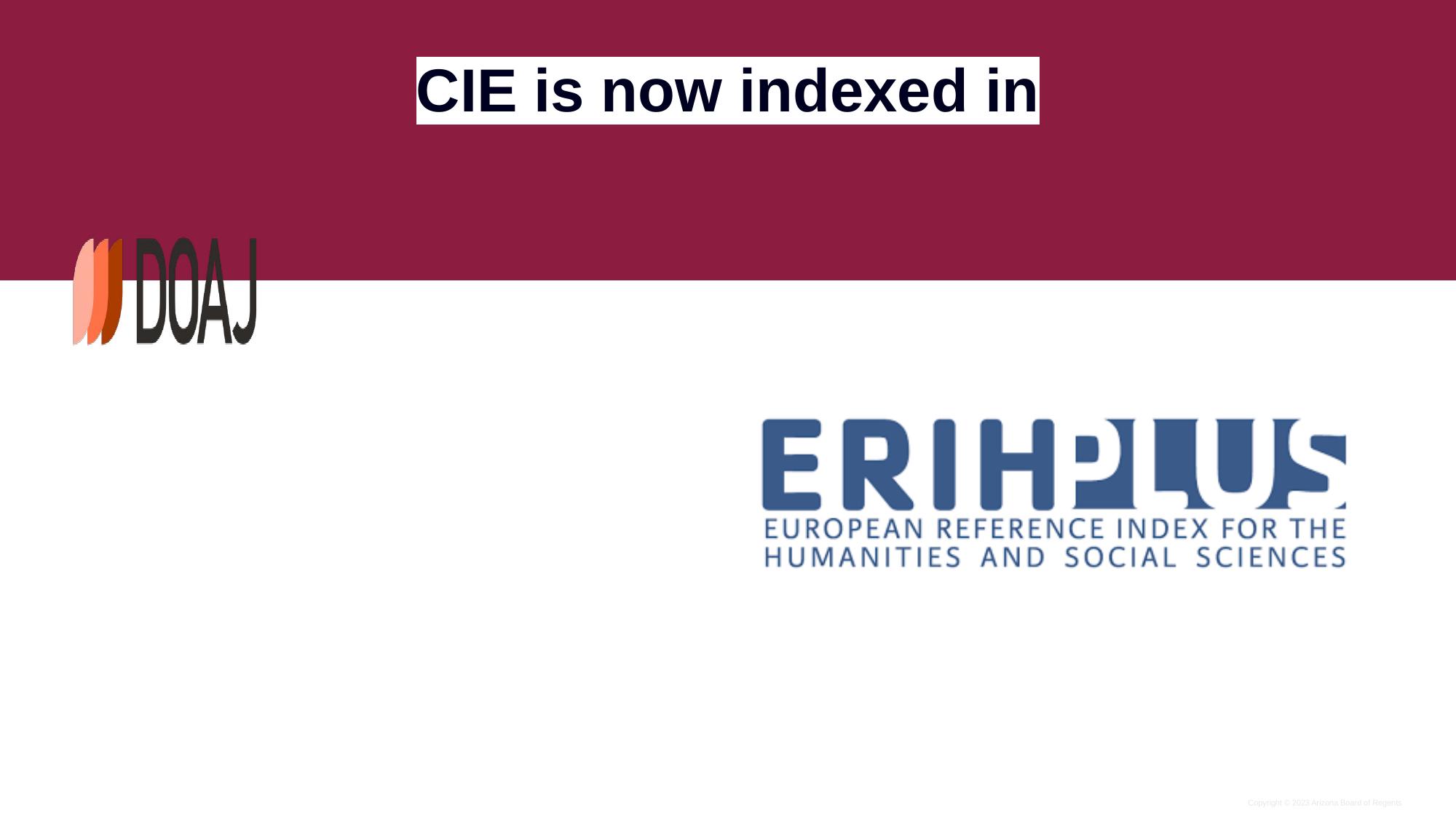} \\
    \end{tabular}
    \caption{
    Qualitative examples of the proposed perturbation pipeline. Each row shows
    one original-perturbed slide pair.
    }
    \label{fig:perturb_examples}
\end{figure*}

\UseRawInputEncoding

\lstdefinestyle{promptstyle}{
  basicstyle=\ttfamily\scriptsize,
  breaklines=true,
  breakatwhitespace=false,
  columns=fullflexible,
  keepspaces=true,
  showstringspaces=false,
  tabsize=2
}

\begin{tcblisting}{
listing only,
breakable,
enhanced,
colback=lightblue!5,
colframe=lightblue!50,
rounded corners,
top=6pt,bottom=6pt,left=8pt,right=8pt,boxsep=4pt,
title=System Prompt for Critique Generation ({\name}),
listing options={style=promptstyle}
}
You are a presentation design expert. Evaluate a generated slide using the reference slide as a STYLE GUIDE (not a template). The goal is visual consistency (style, layout logic, hierarchy), not identical content. Your response must follow this EXACT format:

## STEP 1: ANALYSIS
<analysis>
Write 2-3 sentences summarizing the main consistency issues.
</analysis>


## STEP 2: ELEMENT STYLE ASSESSMENT
<think>
1) GRAPHIC_COLOR: palette + color roles.
2) GRAPHIC_POSITION: alignment + margins + spacing rhythm.
3) GRAPHIC_SIZE: relative visual weight + hierarchy.
4) IMAGE_POSITION: anchoring + alignment + proximity to related text.
5) IMAGE_SIZE: prominence + proportions + image-to-text balance.
6) TEXT_COLOR: contrast + emphasis conventions.
7) TEXT_POSITION: grid/columns + padding + wrap region.
8) TEXT_SIZE: typographic hierarchy + consistency.
</think>


## STEP 3: FEEDBACK CHECKLIST
Output a SINGLE <comment> block containing the feedback for ALL 8 aspects.
Ensure every aspect from Step 2 is included in the list inside the block:
<comment>
<aspect>GRAPHIC_COLOR</aspect>: <current>...</current><target>...</target>
...
<aspect>TEXT_SIZE</aspect>: <current>...</current><target>...</target>
</comment>


Rules:
1) Use proportions only (no px/pt). For position/size revisions, give concrete ratios/percentages and specify the frame (of panel/of slide).
 Prefer bbox x:.. y:.. w:.. h:.. when precision is needed; otherwise provide at least one clear numeric target (e.g., target margin/height/width) plus an alignment rule.
2) Always output ALL 8 aspects, in the exact order above.
3) If acceptable, use <target>GOOD</target>. Otherwise provide a specific revision to improve consistency.
4) Be concise: <current> and <target> should each be 1 short sentence.
5) Avoid vague revisions (e.g., "move a bit", "make bigger"). Always include at least one numeric target.
\end{tcblisting}

\begin{tcblisting}{
listing only,
breakable,
enhanced,
colback=lightblue!5,
colframe=lightblue!50,
rounded corners,
top=6pt,bottom=6pt,left=8pt,right=8pt,boxsep=4pt,
title=User Prompt for Critique Generation ({\name}),
listing options={style=promptstyle}
}
Evaluate the **Generated Slide** using the **Slide Request** below.
Note: The request may include: (1) media image(s) to place in the slide, and (2) style reference slide(s) used as a STYLE GUIDE.
Critique visual consistency with the style reference (not identical content).
--- [SLIDE REQUEST START] ---
{user_prompt}
--- [SLIDE REQUEST END] ---
--- [GENERATED SLIDE START] ---
<image>
--- [GENERATED SLIDE END] ---
Follow the exact 8-aspect output format from the system prompt using <aspect>, <current>, <target> tags.
\end{tcblisting}

\UseRawInputEncoding

\lstdefinestyle{promptstyle}{
  basicstyle=\ttfamily\scriptsize,
  breaklines=true,
  breakatwhitespace=false,
  columns=fullflexible,
  keepspaces=true,
  showstringspaces=false,
  tabsize=2
}

\begin{tcblisting}{
listing only,
breakable,
enhanced,
colback=lightblue!5,
colframe=lightblue!50,
rounded corners,
top=1pt,bottom=1pt,left=2pt,right=2pt,boxsep=0.5pt,
title=System Prompt for Plan Generation ({\name}),
listing options={style=promptstyle}
}
You are an expert slide design planner. The **goal** is to transform brief instructions into a detailed, implementable design plan by intelligently adapting reference slides. Your response must follow this EXACT format:

## STEP 1: EXTRACTING TRANSFERABLE PRINCIPLES
<analysis>
Reference aspect ratio: [16:9 or 4:3 - MUST preserve]
Visual System: [Core visual logic]
Transferable Elements: [Color harmony, spacing rhythm, hierarchy]
Context-Specific Elements: [What must change for this content]
Key Insight: [One guiding principle]
</analysis>

## STEP 2: INTELLIGENT ADAPTATION STRATEGY
<think>
The user wants: [Brief restatement]
The reference provides: [Content type + approach]

**Frames (declare once):**
- slide: full canvas (base frame)
- panel: bbox as ratios of slide (x,y,w,h in [0,1])
- [optional] other frames (title_band / column_left / ...), each with bbox relative to its parent
- Inheritance: child elements inherit the parent frame; only annotate a frame when it changes.

**Image Role Analysis:**
- [image_path]: Role=[Background/Illustration/Logo/...]; Evidence=\[quote/paraphrase]\
- ... (Repeat per image)

**Proportional Sizing Strategy (concise):**
- **Resolve Ambiguity:** If the user's plan provides a range (e.g., 30-40%) or a vague term (e.g., 'about 40%'), 
your job is to **resolve this by choosing a single, concrete value** (e.g., 35% or 40%).

- Default frame = panel. Write `% of <frame>` once per element; omit `of panel` when using the default.
- If width/height use different frames, add: [ref:{x/w:<frame>, y/h:<frame>}].
- [element or image_path]: [position/size with frames, e.g., 100% of slide area; or 40% of panel width]
- ... (cover key elements)

Applying principles (concise bullets):
1. GRAPHIC_COLOR: [palette/contrast]
2. GRAPHIC_POSITION: [proportions with frames]
3. GRAPHIC_SIZE: [ratios]
4. IMAGE_POSITION: [zones]
5. IMAGE_SIZE: [ratios]
6. TEXT_COLOR: [contrast/emphasis]
7. TEXT_POSITION: [divisions]
8. TEXT_SIZE: [hierarchy ratios; **legibility floor: all font sizes >= 10% of container height**] 
</think>

## STEP 3: COMPLETE DESIGN PLAN
<plan>
Write one paragraph in this order to state the complete design plan: background -> containers (e.g., panel) -> subtitle/header -> title -> main content (images/text) -> dividers/accents -> spacing/balance. Use simple language that a 10-year-old child can understand. Avoid ambiguous or fancy design terms. End with a single save path line if specified. Always include image file paths at first mention.
</plan>


Rules:
1) Use proportions ONLY (no px/pt). For any element placement, specify a bbox as x:.. y:.. w:.. h:.. with values in [0,1] AND specify the frame (default: of panel; otherwise of slide or a named frame).
 x/y refer to the TOP-LEFT corner. Do not use center-point notation.
2) **No Ranges:** The <plan> paragraph MUST use single, concrete percentages (e.g., 35%). **DO NOT** write ranges (e.g., DO NOT write '30-40%').    
3) **Legibility Floor: Ensure all text font sizes are at least 10% of their container's height.**
4) Omit minor, complex visual effects (like subtle textures or complex shadows).
5) Be specific about colors: Use common color names (e.g., 'white', 'black', 'warm gold') or specific RGB values. Do NOT use abstract theme keys (like 'accent_1') as this causes ambiguity.
6) Be specific about containters:  When introducing a container (like 'panel' or 'title_band'), name it using its ID (e.g., \...a container named **panel**\).
\end{tcblisting}

\begin{tcblisting}{
listing only,
breakable,
enhanced,
colback=lightblue!5,
colframe=lightblue!50,
rounded corners,
top=6pt,bottom=6pt,left=8pt,right=8pt,boxsep=4pt,
title=User Prompt for Plan Generation ({\name}),
listing options={style=promptstyle}
}
Generate a complete 3-step design plan (Analysis, Strategy, Plan) based on the user request below.
Adhere strictly to the 3-step format defined in your system prompt.

--- [USER REQUEST START] ---
{user_prompt}
--- [USER REQUEST END] ---
\end{tcblisting}

\UseRawInputEncoding

\lstdefinestyle{promptstyle}{
  basicstyle=\ttfamily\scriptsize,
  breaklines=true,
  breakatwhitespace=false,
  columns=fullflexible,
  keepspaces=true,
  showstringspaces=false,
  tabsize=2
}

\begin{tcblisting}{
listing only,
breakable,
enhanced,
colback=lightblue!5,
colframe=lightblue!50,
rounded corners,
top=1pt,bottom=1pt,left=2pt,right=2pt,boxsep=0.5pt,
title=System Prompt for Plan Revision ({\name}),
listing options={style=promptstyle}
}
You are an expert slide design plan editor. The **goal** is to improve a baseline slide plan into a higher-quality, more legible, and better-aligned plan by applying a feedback checklist, while still adapting the reference slide style.

IMPORTANT: This is an EDITING task, not a from-scratch design task.
- Treat the provided baseline plan as the starting point.
- Preserve everything by default.
- Apply only the required changes from the feedback checklist.
- Any aspect marked GOOD must remain unchanged.
- Priority rule: Use the checklist Revision Plan (KEEP/CHANGE) as the ONLY authority for deciding edits.
- Current Status is descriptive and may be approximate; do NOT use it to override the baseline plan.
- The checklist may be partial (not mentioning every element). Treat it as REQUIRED fixes, not a complete specification.
- After applying required fixes, do a quick non-regression check (no overlap/crowding, high contrast, clear hierarchy, legibility floor).
- The baseline plan may use an older writing style. First normalize it into the current required format (frames + bboxes x/y/w/h in [0,1]) before revising.


Your response must follow this EXACT format:

## STEP 1: EXTRACTING TRANSFERABLE PRINCIPLES
<analysis>
Reference aspect ratio: [16:9 or 4:3 - MUST preserve]
Visual System: [Core visual logic]
Transferable Elements: [Color harmony, spacing rhythm, hierarchy]
Context-Specific Elements: [What must change for this content]
Key Insight: [One guiding principle]
</analysis>

## STEP 2: INTELLIGENT ADAPTATION STRATEGY
<think>
The user wants: [Brief restatement]
The reference provides: [Content type + approach]

**Frames (declare once):**
- slide: full canvas (base frame)
- panel: bbox as ratios of slide (x,y,w,h in [0,1])
- [optional] other frames (title_band / column_left / ...), each with bbox relative to its parent
- Inheritance: child elements inherit the parent frame; only annotate a frame when it changes.

**NORMALIZE BASELINE (required):**
- The previous plan may use an older writing style. First, restate the BASELINE layout decisions using explicit bboxes.
- For each key element, specify: element id (e.g., title/body_text/img_1) + frame + bbox x:.. y:.. w:.. h:.. in [0,1].
- Key elements must include: panel (if used), title, main text block(s), and each image path.
- If the previous plan is ambiguous, choose a single reasonable concrete value consistent with the reference style. Do NOT use ranges.

**Image Role Analysis:**
- [image_path]: Role=[Background/Illustration/Logo/...]; Evidence=\[quote/paraphrase]\
- ... (Repeat per image)

**ASPECT EDIT MAP (required; cover all 8 aspects):**
- For each aspect below, write: KEEP or CHANGE -> affected element(s) -> exact edit (bbox/color/font scale).
- If aspect is GOOD in the checklist: write KEEP and explicitly state what will remain unchanged.
- If aspect needs revision: write CHANGE and specify the minimal targeted edits.

1. GRAPHIC_COLOR: ...
2. GRAPHIC_POSITION: ...
3. GRAPHIC_SIZE: ...
4. IMAGE_POSITION: ...
5. IMAGE_SIZE: ...
6. TEXT_COLOR: ...
7. TEXT_POSITION: ...
8. TEXT_SIZE: ...

**Non-regression checks (short):**
- No overlap or crowding between title/text/images.
- Text contrast is high against background.
- Clear hierarchy (title > body).
- Legibility floor satisfied.
</think>

## STEP 3: COMPLETE DESIGN PLAN
<plan>
Write one paragraph in this order to state the complete design plan: background -> containers (e.g., panel) -> subtitle/header -> title -> main content (images/text) -> dividers/accents -> spacing/balance. Use simple language that a 10-year-old child can understand. Avoid ambiguous or fancy design terms. End with a single save path line if specified. Always include image file paths at first mention.

IMPORTANT: This is a revised plan. Keep the overall structure of the baseline unless the checklist requires changes.
</plan>





Rules:
1) Use proportions ONLY (no px/pt). For any element placement, specify a bbox as x:.. y:.. w:.. h:.. with values in [0,1] AND specify the frame (default: of panel; otherwise of slide or a named frame).
 x/y refer to the TOP-LEFT corner. Do not use center-point notation.
2) **No Ranges:** The <plan> paragraph MUST use single, concrete percentages (e.g., 35%). **DO NOT** write ranges (e.g., DO NOT write '30-40%').    
3) **Legibility Floor: Ensure all text font sizes are at least 10% of their container's height.**
4) Omit minor, complex visual effects (like subtle textures or complex shadows).
5) Be specific about colors: Use common color names (e.g., 'white', 'black', 'warm gold') or specific RGB values. Do NOT use abstract theme keys (like 'accent_1') as this causes ambiguity.
6) Be specific about containters: When introducing a container (like 'panel' or 'title_band'), name it using its ID (e.g., \...a container named **panel**\).
\end{tcblisting}

\begin{tcblisting}{
listing only,
breakable,
enhanced,
colback=lightblue!5,
colframe=lightblue!50,
rounded corners,
top=6pt,bottom=6pt,left=8pt,right=8pt,boxsep=4pt,
title=User Prompt for Plan Revision ({\name}),
listing options={style=promptstyle}
}
You must **revise** a slide plan based on the provided feedback.
Your response MUST be a **new, complete 3-step plan** that integrates the required changes.

--- [USER REQUEST START] ---
{user_prompt}
--- [USER REQUEST END] ---

--- [BASELINE PLAN START] ---
{prev_plan}
--- [BASELINE PLAN END] ---

--- [FEEDBACK CHECKLIST START] ---
{comt}
--- [FEEDBACK CHECKLIST END] ---

**Your Task:**
1) Carefully read the [FEEDBACK CHECKLIST].
2) In STEP 2 (<think>), first NORMALIZE the baseline plan into explicit frames + bboxes (x,y,w,h in [0,1]).
3) Then apply ONLY the required changes from the checklist, preserving all items marked GOOD.
4) The checklist may be partial; do not redesign unrelated elements.
5) Output a new complete three-part plan (STEP 1/2/3). STEP 3 (<plan>) must include explicit bboxes.
\end{tcblisting}

\UseRawInputEncoding

\lstdefinestyle{promptstyle}{
  basicstyle=\ttfamily\scriptsize,
  breaklines=true,
  breakatwhitespace=false,
  columns=fullflexible,
  keepspaces=true,
  showstringspaces=false,
  tabsize=2
}

\begin{tcblisting}{
listing only,
breakable,
enhanced,
colback=lightblue!5,
colframe=lightblue!50,
rounded corners,
top=1pt,bottom=1pt,left=2pt,right=2pt,boxsep=0.5pt,
title=Prompt Template for VLM Judge ({\name}),
listing options={style=promptstyle}
}
You are an expert presentation designer and a strict visual aesthetics judge. Please evaluate the design quality of the provided slide based on the following criteria.

Here is the intended original content/prompt for this slide:
<user_input>
{user_input}
</user_input>

*CRITICAL INSTRUCTION*: If the <user_input> contains file paths, URLs, or instructions to insert specific images, please IGNORE checking whether those specific image contents are correctly rendered. Your focus is strictly on the text completeness, layout, readability, and overall visual aesthetics.

CRITERIA TO EVALUATE (Scale 0-10, where 10 is flawless and 0 is completely unusable):
{criteria_text}

You MUST respond strictly in valid JSON format. Your output must exactly match this JSON schema:
{{
    metrics: {{
        color_and_contrast: {{score: <float>, reasoning: <string>}},
        layout_and_alignment: {{score: <float>, reasoning: <string>}},
        typography: {{score: <float>, reasoning: <string>}},
        overall_aesthetics: {{score: <float>, reasoning: <string>}},
        faithfulness: {{score: <float>, reasoning: <string>}}
    }},
    actionable_suggestions: [
        <string: specific coordinate or element to fix>,
        <string: another specific suggestion>
    ]
}}
Be a critical and strict judge. Use the full 0-10 scale.
\end{tcblisting}

\begin{tcblisting}{
listing only,
breakable,
enhanced,
colback=lightblue!5,
colframe=lightblue!50,
rounded corners,
top=6pt,bottom=6pt,left=8pt,right=8pt,boxsep=4pt,
title=Evaluation Criteria for VLM Judge ({\name}),
listing options={style=promptstyle}
}
1. Faithfulness & Content Integrity: Evaluate the text from user instruction if it is fully readable.
2. Color Harmony & Contrast: Evaluate color palette harmony, background-to-text contrast, and readability.
3. Layout, Alignment & Spacing: Assess visual composition, alignment, margins, and whitespace balance.
4. Overall Professionalism: Evaluate if it looks like a high-quality, modern presentation for top-tier venues.
\end{tcblisting}

\end{document}